\newcommand{\commentout}[1]{}
\newcommand{\junk}[1]{}
\newcommand{\greedy}{{\tt Greedy}}
\newcommand{\opm}{{\tt OPM}}
\newcommand{\bw}{{\bf w}}
\newcommand{\bx}{{\bf x}}
\newcommand{\by}{{\bf y}}
\newcommand{\cI}{\mathcal{I}}
\newcommand{\eps}{\varepsilon}
\newcommand{\realset}{\mathbb{R}}
\newcommand{\abs}[1]{\left|#1\right|}
\newcommand{\E}[2]{\mathbb{E}_{#2} \! \left[#1\right]}
\newcommand{\floors}[1]{\left\lfloor#1\right\rfloor}
\newcommand{\I}[1]{\mathds{1} \! \left\{#1\right\}}
\newcommand{\set}[1]{\left\{#1\right\}}
\DeclareMathOperator*{\argmax}{arg\,max\,}
\DeclareMathOperator*{\argmin}{arg\,min\,}
\begin{document}

\title{Learning to Act Greedily: Polymatroid Semi-Bandits}

\author{\name Branislav Kveton
\email branislav.kveton@technicolor.com \\
\addr Technicolor Labs \\
175 S San Antonio Road, Suite 200, Los Altos, CA 94022, USA \AND
\name Zheng Wen
\email zhengwen@yahoo-inc.com \\
\addr Yahoo Labs \\
701 1st Ave, Sunnyvale, CA 94089, United States \AND
\name Azin Ashkan
\email azin.ashkan@technicolor.com \\
\addr Technicolor Labs \\
175 S San Antonio Road, Suite 200, Los Altos, CA 94022, USA \AND
\name Michal Valko
\email michal.valko@inria.fr \\
\addr INRIA Lille - Nord Europe, SequeL team \\
40 avenue Halley 59650, Villeneuve d'Ascq, France}

\editor{}

\maketitle
 
\begin{abstract}
Many important optimization problems, such as the minimum spanning tree and minimum-cost flow, can be solved optimally by a greedy method. In this work, we study a learning variant of these problems, where the model of the problem is unknown and has to be learned by interacting repeatedly with the environment in the bandit setting. We formalize our learning problem quite generally, as learning how to maximize an unknown \emph{modular function} on a known \emph{polymatroid}. We propose a computationally efficient algorithm for solving our problem and bound its expected cumulative regret. Our gap-dependent upper bound is tight up to a constant and our gap-free upper bound is tight up to polylogarithmic factors. Finally, we evaluate our method on three problems and demonstrate that it is practical.
\end{abstract}

\begin{keywords}
bandits, combinatorial optimization, matroids, polymatroids, submodularity
\end{keywords}


\section{Introduction}
\label{sec:introduction}

Many important combinatorial optimization problems, such as the minimum-cost flow \citep{megiddo74optimal} and minimum spanning tree \citep{papadimitriou98combinatorial}, can be solved optimally by a greedy algorithm. These problems can be solved efficiently because they can be viewed as optimization on \emph{matroids} \citep{whitney35abstract} or \emph{polymatroids} \citep{edmonds70submodular}. More specifically, they can be formulated as finding the maximum of a modular function on the polytope of a submodular function. In this work, we study a learning variant of this problem where the modular function is unknown.

Our learning problem is sequential and divided into episodes. In each episode, the learning agent chooses a feasible solution to our problem, the basis of a polymatroid; observes noisy weights of all items with non-zero contributions in the basis; and receives the dot product between the basis and the weights as a payoff. The goal of the learning agent is to maximize its expected cumulative return over time, or equivalently to minimize its expected cumulative regret. Many practical problems can be formulated in our setting, such as learning a routing network \citep{oliveira05survey} where the delays on the links of the network are stochastic and initially unknown. In this problem, the bases are spanning trees, the observed weights are the the delays on the links of the spanning tree, and the cost is the sum of the observed delays.

This paper makes three contributions. First, we bring together the concepts of bandits \citep{lai85asymptotically,auer02finitetime} and polymatroids \citep{edmonds70submodular}, and propose \emph{polymatroid bandits}, a new class of stochastic learning problems. A multi-armed bandit \citep{lai85asymptotically} is a framework for solving online learning problems that require exploration. The framework has been successfully applied to a variety of problems, including those in combinatorial optimization \citep{gai12combinatorial,cesabianchi12combinatorial,audibert14regret}. In this paper, we extend bandits to the combinatorial optimization problems that can be solved greedily.

Second, we propose a simple algorithm for solving our problem, which explores based on the optimism in the face of uncertainty. We refer to our algorithm as \emph{Optimistic Polymatroid Maximization ($\opm$)}. $\opm$ has two key properties. First, it is computationally efficient because the basis in each episode is chosen greedily. Second, $\opm$ is also sample efficient. In particular, we derive a gap-dependent upper bound on the expected cumulative regret of $\opm$ and show that it is tight up to a constant, and we also derive a gap-free upper bound and show that it is tight up to polylogarithmic factors. Our upper bounds exploit the structural properties of polymatroids and improve over general-purpose bounds for stochastic combinatorial semi-bandits.

Finally, we evaluate $\opm$ on three problems. The first problem is a synthetic flow network and we use it to demonstrate that our gap-dependent upper bound is quite practical, an order of magnitude larger than the observed regret. The second problem is learning of a routing network for an Internet service provider (ISP). The last problem is learning how to recommend diverse movies. All three problems can be solved efficiently in our framework. This demonstrates that $\opm$ is practical and can solve a wide range of problems.

We adopt the following notation. We write $A + e$ instead of $A \cup \set{e}$, and $A + B$ instead of $A \cup B$. We also write $A - e$ instead of $A \setminus \set{e}$, and $A - B$ instead of $A \setminus B$.

\commentout{We study sequential problems with complex actions set that can be solved 
efficiently. Typically, the problems considered to be amenable to efficient 
solutions are equated with linear or \textit{modular} problems, due to favorable 
dependency properties. In this paper, we extend the set of efficiently solvable 
online problems to polymatroids. Our motivation stems from the need 
for algorithms as efficient as greedy in order to be useful in situations 
with massive amounts of data. Furthermore, we are driven by the fact that the 
efficient greedy algorithms for polymatroids deliver optimal solutions given 
the known parameters of the problem. 

We focus on sequential problems with limited feedback, know as 
\textit{multi-arm bandits}. A multi-armed bandit \citep{lai85asymptotically} is 
a popular framework for solving online learning problems that require 
exploration. The framework has been successfully applied to many problems, 
including combinatorial optimization 
\citep{gai12combinatorial,cesabianchi12combinatorial,audibert14regret}. A common 
objective in combinatorial optimization is to choose $K$ items out of $L$, 
subject to combinatorial constraints. Therefore, the number of potential 
solutions is huge, on the order of $L \choose K$, and it is challenging to 
design practical bandit algorithms for these problems.

In this paper, we propose the first algorithm that learns to maximize a modular 
function on a polymatroid. We refer to this problem as a polymatroid bandit. A 
polymatroid \citep{edmonds70submodular} is a polytope of a submodular function 
that is closely related to computational efficiency in polyhedral optimization. 
In particular, it is well known that a modular function on a polymatroid can be 
maximized greedily. Many popular functions, such as network flows and entropy 
\citep{fujishige05submodular}, are submodular and therefore can be represented 
as a polymatroid. As a result, optimization on polymatroids is an important 
class of problems. A well-known problem in this class is minimum-cost flow 
\citep{megiddo74optimal}.

To accommodate many settings, our learning problem is cast as finding a maximum-weight basis of a polymatroid, where each item $e$ in the ground set $E$ is associated with a stochastic weight $\bw(e)$. The vector of these weights $\bw$ is drawn i.i.d. from a probability distribution $P$. The distribution $P$ is initially unknown and we learn it by interacting repeatedly with the environment.

We make three contributions. First, we bring together the ideas of bandits \citep{lai85asymptotically,auer02finitetime} and polymatroids \citep{edmonds70submodular}, and propose a novel learning problem of \emph{polymatroid bandits}. Second, we propose a conceptually simple algorithm for solving our problem, which explores based on the optimism in the face of uncertainty. We refer to the algorithm as \emph{Optimistic Polymatroid Maximization ($\opm$)}. Our method is computationally efficient, because the maximum-weight basis in any episode can be computed in $O(L \log L)$ time, where $L$ is the number of items. $\opm$ is also sample efficient, because its regret is at most linear in all quantities of interest and sublinear in time. Finally, we evaluate our method on a real-world problem and demonstrate that it is practical.

To simplify notation, we sometimes write $A + e$ instead of $A \cup \set{e}$, and $A + B$ instead of $A \cup B$.}


\section{Polymatroids}
\label{sec:polymatroids}

In this section, we first introduce polymatroids and then illustrate them on practical problems. A \emph{polymatroid} \citep{edmonds70submodular} is a polytope associated with a submodular function. More specifically, a polymatroid is a pair $M = (E, f)$, where $E = \set{1, \dots, L}$ is a \emph{ground set} of $L$ items and $f: 2^E \to \realset^+$ is a function from the power set of $E$ to non-negative real numbers. The function $f$ is \emph{monotonic}, $\forall X \subseteq Y \subseteq E: f(X) \leq f(Y)$; \emph{submodular}, $\forall X, Y \subseteq E: f(X) + f(Y) \geq f (X \cup Y) + f(X \cap Y)$; and $f(\emptyset) = 0$. Since $f$ is monotonic, $f(E)$ is one of its maxima. We refer to $f(E)$ as the \emph{rank} of a polymatroid and denote it by $K$. Without loss of generality, we assume that $f(e) \leq 1$ for all items $e \in E$. Because $f$ is submodular, we indirectly assume that $f(X + e) - f(X) \leq 1$ for all $X \subseteq E$.

The \emph{independence polyhedron} $P_M$ associated with polymatroid $M$ is a subset of $\realset^L$ defined as:
\begin{align}
  \textstyle
  P_M = \set{\bx: \bx \in \realset^L, \ \bx \geq 0,
  \ \forall X \subseteq E: \sum_{e \in X} \bx(e) \leq f(X)},
  \label{eq:independence polyhedron}
\end{align}
where $\bx(e)$ is the $e$-th entry of vector $\bx$. The vector $\bx$ is \emph{independent} if $\bx \in P_M$. The \emph{base polyhedron} $B_M$ is a subset of $P_M$ defined as:
\begin{align}
  \textstyle
  B_M = \set{\bx: \bx \in P_M, \ \sum_{e \in E} \bx(e) = K}.
  \label{eq:base polyhedron}
\end{align}
The vector $\bx$ is a \emph{basis} if $\bx \in B_M$. In other words, $\bx$ is independent and its entries sum up to $K$.

\subsection{Optimization on Polymatroids}
\label{sec:optimization}

\begin{algorithm}[t]
  \caption{$\greedy$: Edmond's algorithm for computing the maximum-weight basis of a polymatroid.}
  \label{alg:greedy}
  \begin{algorithmic}
    \State {\bf Input:} Polymatroid $M = (E, f)$, weights $\bw$
    \State
    \State Let $e_1, \dots, e_L$ be an ordering of items such that:
    \State \quad $\bw(e_1) \geq \ldots \geq \bw (e_L)$
    \State $\bx \gets \text{All-zeros vector of length } L$
    \ForAll{$i = 1, \dots, L$}
      \State $\bx(e_i) \gets f(\set{e_1, \dots, e_i}) - f(\set{e_1, \dots, e_{i  - 1}})$
    \EndFor
    \State
    \State {\bf Output:} Maximum-weight basis $\bx$
  \end{algorithmic}
\end{algorithm}

A \emph{weighted polymatroid} is a polymatroid associated with a vector of weights $\bw \in (\realset^+)^L$. The $e$-th entry of $\bw$, $\bw(e)$, is the weight of item $e$. A classical problem in polyhedral optimization is to find a \emph{maximum-weight basis} of a polymatroid:
\begin{align}
  \bx^\ast = \argmax_{\bx \in B_M} \langle\bw, \bx\rangle = \argmax_{\bx \in P_M} \langle\bw, \bx\rangle.
  \label{eq:optimal}
\end{align}
This basis can be computed greedily (Algorithm~\ref{alg:greedy}). The greedy algorithm works as follows. First, the items $E$ are sorted in decreasing order of their weights, $\bw(e_1) \geq \ldots \geq \bw (e_L)$. We assume that the ties are broken by an arbitrary but fixed rule. Second, $\bx^\ast$ is computed as $\bx^\ast(e_i) = f(\set{e_1, \dots, e_i}) - f(\set{e_1, \dots, e_{i  - 1}})$ for all $i$. Note that the \emph{minimum-weight basis} of a polymatroid with weights $\bw$ is the maximum-weight basis of the same polymatroid with weights $\max_{e \in E} \bw(e) - \bw$:
\begin{align}
  \argmin_{\bx \in B_M} \langle\bw, \bx\rangle =
  \argmax_{\bx \in B_M} \langle\max_{e \in E} \bw(e) - \bw, \bx\rangle.
  \label{eq:minimum basis}
\end{align}
So the minimization problem is mathematically equivalent to the maximization problem \eqref{eq:optimal}, and all results in this paper straightforwardly generalize to the minimization.

Many existing problems can be viewed as optimization on a polymatroid \eqref{eq:optimal}. For instance, polymatroids generalize \emph{matroids} \citep{whitney35abstract}, a notion of independence in combinatorial optimization that is closely related to computational efficiency. In particular, let $M = (E, \cI)$ be a matroid, where $E = \set{1, \dots, L}$ is its ground set, $\cI \subseteq 2^E$ are its independent sets, and:
\begin{align}
  f(X) = \max_{Y: Y \subseteq X, Y \in \cI} \abs{Y}
\end{align}
is its \emph{rank function}. Let $\bw \in (\realset^+)^L$ be a vector of non-negative weights. Then the maximum-weight basis of a matroid:
\begin{align}
  A^\ast = \argmax_{A \in \cI} \sum_{e \in A} \bw(e)
\end{align}
can be also derived as $A^\ast = \set{e: \bx^\ast(e) = 1}$, where $\bx^\ast$ is the maximum-weight basis of the corresponding polymatroid. The basis is $\bx^\ast \in \set{0, 1}^L$ because the rank function is a monotonic submodular function with zero-one increments \citep{fujishige05submodular}.

Our optimization problem can be written as a \emph{linear program (LP)} \citep{bertsimas97introduction}:
\begin{align}
  \max_\bx \sum_{e \in E} \bw(e) \bx(e) \qquad
  \text{subject to:} \sum_{e \in X} \bx(e) \leq f(X) \quad
  \forall X \subseteq E,
  \label{eq:LP}
\end{align}
where $\bx \in (\realset^+)^L$ is a vector of $L$ optimized variables. This LP has exponentially many constraints, one for each subset $X \subseteq E$. Therefore, it cannot be solved directly. Nevertheless, $\greedy$ can solve the problem in $O(L \log L)$ time. Therefore, our problem is a very efficient form of linear programming.

Many combinatorial optimization concepts, such as \emph{flows} and \emph{entropy} \citep{fujishige05submodular}, are submodular. Therefore, optimization on these concepts involves polymatroids. A well-known problem in this class is the \emph{minimum-cost flow} \citep{megiddo74optimal}. This problem can be formulated as follows. The ground set $E$ are the source nodes of a flow network, $f(X)$ is the maximum flow through source nodes $X \subseteq E$, and $\bw(e)$ is the cost of a unit flow through source node $e$. The minimum-weight basis of this polymatroid is the maximum flow with the minimum cost \citep{fujishige05submodular}, which we refer to as the minimum-cost flow.

The problem of recommending diverse items can be also cast as optimization on a polymatroid \citep{ashkan14diversified,ashkan14dum}. Let $E$ be a set of recommendable items, $f(X)$ be the number of topics covered by items $X$, and $\bw$ be a weight vector such that $\bw(e)$ is the popularity of item $e$. Then $\bx^\ast = \greedy(M, \bw)$ is a vector such that $\bx^\ast(e) > 0$ if and only if item $e$ is the most popular item in at least one topic covered by item $e$. We illustrate this concept on a simple example. Let the ground set $E$ be a set of $3$ movies:
\begin{center}
  \begin{tabular}{clrl} \hline
    $e$ & Movie title & Popularity $\bw(e)$ & Movie genres \\ \hline
    1 & Inception & 0.8 & Action \\
    2 & Grown Ups 2 & 0.5 & Comedy \\
    3 & Kindergarten Cop & 0.6 & Action Comedy \\ \hline
  \end{tabular}
\end{center}
Let $f(X)$ be the number of movie genres covered by movies $X$. Then $f$ is submodular and defined as:
\begin{alignat}{8}
  f(\emptyset) & = 0, & \qquad
  f(\set{2}) & = 1, & \qquad
  f(\set{1, 2}) & = 2, & \qquad
  f(\set{2, 3}) & = 2, \\
  f(\set{1}) & = 1, & \qquad
  f(\set{3}) & = 2, & \qquad
  f(\set{1, 3}) & = 2, & \qquad
  f(\set{1, 2, 3}) & = 2. \nonumber
\end{alignat}
The maximum-weight basis of polymatroid $M = (E, f)$ is $\bx^\ast = (1, 0, 1)$, and $\set{e: \bx^\ast(e) > 0} = \set{1, 3}$ is the minimal set of movies that cover each movie genre by the most popular movie in that genre.

\subsection{Combinatorial Optimization on Polymatroids}
\label{sec:combinatorial optimization}

In this paper, we restrict our attention to the feasible solutions:
\begin{align}
  \Theta = \set{\bx: \left(\exists \bw \in (\realset^+)^L: \bx = \greedy(M, \bw)\right)}
  \label{eq:feasible set}
\end{align}
that can be computed greedily for some weight vector $\bw$ and define our objective as finding:
\begin{align}
  \bx^\ast = \argmax_{\bx \in \Theta} \langle\bw, \bx\rangle.
  \label{eq:optimal basis}
\end{align}
The set $\Theta$ are the vertices of $B_M$ and we prove this formally in Lemma~\ref{lem:greedy vertices} in Appendix.

Our choice is motivated by three reasons. First, we study the problem of learning to act greedily. So we are only interested in the bases that can be computed greedily. Second, many optimization problems of our interest (Section~\ref{sec:optimization}) are combinatorial in nature and only the bases in $\Theta$ are suitable feasible solutions. For instance, in a graphic matroid, $\Theta$ is a set of spanning trees. In a linear matroid, $\Theta$ is a set of maximal sets of linearly independent vectors. The bases in $B_M - \Theta$ do not have this interpretation. Another example is our recommendations problem in Section~\ref{sec:optimization}. In this problem, for any $\bx = \greedy(M, \bw)$, $\set{e: \bx(e) > 0}$ is a minimal set of items that cover each topic by the most popular item according to $\bw$. The bases in $B_M - \Theta$ cannot be interpreted in this way. Finally, we note that our choice does not have any impact on the notion of optimality. In particular, let $\bx$ be optimal for some $\bw$. Then $\bx^g = \greedy(M, \bw)$ is also optimal and since $\bx^g \in \Theta$, it follows that:
\begin{align}
  \max_{\bx \in B_M} \langle\bw, \bx\rangle =
  \max_{\bx \in \Theta} \langle\bw, \bx\rangle.
\end{align}


\section{Polymatroid Bandits}
\label{sec:polymatroid bandits}

The maximum-weight basis of a polymatroid cannot be computed when the weights $\bw$ are unknown. This may happen in practice. For instance, suppose that we want to recommend a diverse set of popular movies (Section~\ref{sec:optimization}) but the popularity of these movies is initially unknown, perhaps because the movies are newly released. In this work, we study a learning variant of maximizing a modular function on a polymatroid that can solve this type of problems.


\subsection{Model}
\label{sec:model}

We formalize our learning problem as a polymatroid bandit. A \emph{polymatroid bandit} is a pair $(M, P)$, where $M$ is a polymatroid and $P$ is a probability distribution over the weights $\bw \in \realset^L$ of items $E$ in $M$. The $e$-th entry of $\bw$, $\bw(e)$, is the weight of item $e$. We assume that the weights $\bw$ are drawn i.i.d. from $P$ and that $P$ is unknown. Without loss of generality, we assume that $P$ is a distribution over the unit cube $[0, 1]^L$. Other than that, we do not assume anything about $P$. We denote the expected weights of the items by $\bar{\bw} = \mathbb{E}[\bw]$. By our assumptions on $P$, $\bar{\bw}(e) \geq 0$ for all items $e$.

Each item $e$ is associated with an \emph{arm} and each feasible solution $\bx \in \Theta$ is associated with a set of arms $A = \set{e: \bx(e) > 0}$. The arms $A$ are the items with non-zero contributions in $\bx$. After the arms are \emph{pulled}, the learning agent receives a \emph{payoff} of $\langle\bw, \bx\rangle$ and \emph{observes} $\set{(e, \bw(e)): \bx(e) > 0}$, the weights of all items with non-zero contributions in $\bx$. This feedback model is known as \emph{semi-bandit} \citep{audibert14regret}. The solution to our problem is a maximum-weight basis in expectation:
\begin{align}
  \bx^\ast =
  \arg\max_{\bx \in \Theta} \E{\langle\bw, \bx\rangle}{\bw} =
  \arg\max_{\bx \in \Theta} \langle\bar{\bw}, \bx\rangle.
  \label{eq:optimal arm}
\end{align}
This problem is equivalent to problem \eqref{eq:optimal basis} and therefore can be solved greedily, $\bx^\ast = \greedy(M, \bar{\bw})$.

We choose our observation model for several reasons. First, the model is a natural generalization of that in matroid bandits \citep{kveton14matroid}. In matroid bandits, the bases are of the form $\bx \in \set{0, 1}^L$ and the learning agents observes the weights of all chosen items $e$, $\bx(e) = 1$. In this case, $\bx(e) = 1$ is equivalent to $\bx(e) > 0$. Second, our observation model is suitable for our motivating examples (Section~\ref{sec:optimization}). Specifically, in the minimum-cost flow problem, we assume that the learning agent observes the costs of all source nodes that contribute to the maximum flow. In the movie recommendation problem, the agent observes individual movies chosen by the user, from a set of recommended movies. Finally, our observation model allows us to derive similar regret bounds to those in matroid bandits \citep{kveton14matroid}.

Our learning problem is \emph{episodic}. Let $(\bw_t)_{t = 1}^n$ be an i.i.d. sequence of weights drawn from distribution $P$. In episode $t$, the learning agent chooses basis $\bx_t$ based on its prior actions $\bx_1, \dots, \bx_{t - 1}$ and observations of $\bw_1, \dots, \bw_{t - 1}$; gains $\langle\bw_t, \bx_t\rangle$; and observes $\set{(e, \bw_t(e)): \bx_t(e) > 0}$, the weights of all items with non-zero contributions in $\bx_t$. The agent interacts with the environment in $n$ episodes. The goal of the agent is to maximize its expected cumulative return, or equivalently to minimize its \emph{expected cumulative regret}:
\begin{align}
  R(n) = \E{\sum_{t = 1}^n R(\bx_t, \bw_t)}{\bw_1, \dots, \bw_n},
  \label{eq:cumulative regret}
\end{align}
where $R(\bx, \bw) = \langle\bw, \bx^\ast\rangle - \langle\bw, \bx\rangle$ is the regret associated with basis $\bx$ and weights $\bw$.


\subsection{Algorithm}
\label{sec:algorithm}

\begin{algorithm}[t]
  \caption{$\opm$: Optimistic polymatroid maximization.}
  \label{alg:bandit}
  \begin{algorithmic}
    \State {\bf Input:} Polymatroid $M = (E, f)$
    \State
    \State Observe $\bw_0 \sim P$
    \Comment{Initialization}
    \State $\hat{\bw}_1(e) \gets \bw_0(e) \hspace{1.795in} \forall e \in E$
    \State $T_0(e) \gets 1 \hspace{2.095in} \forall e \in E$
    \State
    \ForAll{$t = 1, \dots, n$}
      \State $U_t(e) \gets \hat{\bw}_{T_{t - 1}(e)}(e) + c_{t - 1, T_{t - 1}(e)} \hspace{0.48in} \forall e \in E$
      \Comment{Compute UCBs}
      \State $\bx_t \gets \greedy(M, U_t)$
      \Comment{Find a maximum-weight basis}
      \State Observe $\set{(e, \bw_t(e)): \bx_t(e) > 0}$, where $\bw_t \sim P$
      \Comment{Choose the basis}
      \State
      \State $T_t(e) \gets T_{t - 1}(e) \hspace{1.525in} \forall e \in E$
      \Comment{Update statistics}
      \State $T_t(e) \gets T_t(e) + 1 \hspace{1.43in} \forall e: \bx_t(e) > 0$
      \State $\displaystyle \hat{\bw}_{T_t(e)}(e) \gets
      \frac{T_{t - 1}(e) \hat{\bw}_{T_{t - 1}(e)}(e) + \bw_t(e)}{T_t(e)}
      \hspace{0.2in} \forall e: \bx_t(e) > 0$
    \EndFor
  \end{algorithmic}
\end{algorithm}

Our learning algorithm is designed based on the \emph{optimism in the face of uncertainty} principle \citep{auer02finitetime}. In particular, it is a greedy method for finding a maximum-weight basis of a polymatroid where the expected weight $\bar{\bw}(e)$ of each item is substituted with its optimistic estimate $U_t(e)$. We refer to our method as \emph{Optimistic Polymatroid Maximization ($\opm$)}.

The pseudocode of $\opm$ is given in Algorithm~\ref{alg:bandit}. In each episode $t$, the algorithm works as follows. First, we compute an \emph{upper confidence bound} (UCB) on the expected weight of each item $e$:
\begin{align}
  U_t(e) = \hat{\bw}_{T_{t - 1}(e)}(e) + c_{t - 1, T_{t - 1}(e)},
  \label{eq:UCB}
\end{align}
where $\hat{\bw}_{T_{t - 1}(e)}(e)$ is our estimate of the expected weight $\bar{\bw}(e)$ in episode $t$, $c_{t - 1, T_{t - 1}(e)}$ is the radius of the confidence interval around this estimate, and $T_{t - 1}(e)$ denotes the number of times that item $e$ is selected in the first $t - 1$ episodes, $\bx_i(e) > 0$ for $i < t$. Second, we compute the maximum-weight basis with respect to $U_t$ using $\greedy$. Finally, we select the basis, observe the weights of all items $e$ where $\bx_t(e) > 0$, and then update our model $\hat{\bw}$ of the environment. The radius:
\begin{align}
  c_{t, s} = \sqrt{\frac{2 \log t}{s}}
  \label{eq:confidence radius}
\end{align}
is designed such that each UCB is a high-probability upper bound on the corresponding weight $\hat{\bw}_s(e)$. The UCBs encourage exploration of items that have not been observed sufficiently often. As the number of past episodes increases, we get better estimates of the weights $\bar{\bw}$, all confidence intervals shrink, and $\opm$ starts exploiting most rewarding items. The $\log(t)$ term increases with time and enforces continuous exploration.

For simplicity of exposition, we assume that $\opm$ is initialized by observing each item once. In practice, this initialization step can be implemented efficiently in the first $L$ episodes. In particular, in episode $t \leq L$, $\opm$ chooses first item $t$ and then all other items, in an arbitrary order. The corresponding regret is bounded by $K L$ because $\langle\bar{\bw}, \bx\rangle \in [0, K]$ for any $\bar{\bw}$ (Section~\ref{sec:model}) and basis $\bx$ (Section~\ref{sec:polymatroids}).

$\opm$ is a greedy method and therefore is extremely computationally efficient. In particular, suppose that the function $f$ is an oracle that can be queried in $O(1)$ time. Then the time complexity of $\opm$ in episode $t$ is $O(L \log L)$, comparable to that of sorting $L$ numbers. The design of $\opm$ is not very surprising and it draws on prior work \citep{kveton14matroid,gai12combinatorial}.

Our major contribution is that we derive a tight upper bound on the regret of $\opm$. Our analysis is novel and is a significant improvement over \cite{kveton14matroid}, who analyze the regret of $\opm$ in the context of matroids. Roughly speaking, the analysis of \cite{kveton14matroid} leverages the augmentation property of a matroid. Our analysis is based on the submodularity of a polymatroid.


\section{Analysis}
\label{sec:analysis}

This section is organized as follows. First, we propose a novel decomposition of the regret of $\opm$ in a single episode (Section~\ref{sec:regret decomposition}). Loosely speaking, we decompose the regret as a sum of its parts, the fractional gains of individual items in the optimal and suboptimal bases. This part of the proof relies heavily on the structure of a polymatroid and is a major contribution. Second, we apply the regret decomposition to bound the regret of $\opm$ (Section~\ref{sec:upper bounds}). Third, we compare our regret bounds to existing upper bounds (Section~\ref{sec:upper bound improvement}) and prove matching lower bounds (Section~\ref{sec:lower bounds}). Finally, we summarize our results (Section~\ref{sec:discussion}).

\subsection{Regret Decomposition}
\label{sec:regret decomposition}

The key step in our analysis is that we bound the expected regret in episode $t$ for any basis $\bx_t$, $R(\bx_t, \bar{\bw})$. In rest of this section, we fix the basis $\bx_t$ and drop indexing by time $t$ to simplify our notation.

Without loss of generality, we assume that the items in the ground set $E$ are ordered such that $\bar{\bw}(1) \geq \ldots \geq \bar{\bw}(L)$. So the optimal basis $\bx^\ast$ is defined as:
\begin{align}
  \bx^\ast(i) = f(A^\ast_i) - f(A^\ast_{i - 1}) \qquad i = 1, \dots, L;
\end{align}
where $A^\ast_i = \set{1, \dots, i}$ are the first $i$ items in $E$. Let $U_t$ be the vector of UCBs in episode $t$ and $a_1, \dots, a_L$ be the ordering of items such that $U_t(a_1) \geq \ldots \geq U_t(a_L)$. Then the basis $\bx$ in episode $t$ is defined as:
\begin{align}
  \bx(a_k) = f(A_k) - f(A_{k - 1}) \qquad k = 1, \dots, L;
\end{align}
where $A_k = \set{a_1, \dots, a_k}$. The hardness of discriminating items $e$ and $e^\ast$ is measured by a \emph{gap} between the expected weights of the items:
\begin{align}
  \Delta_{e, e^\ast} = \bar{\bw}(e^\ast) - \bar{\bw}(e).
  \label{eq:gap}
\end{align}
For each item $e$, we define $\rho(e)$, the largest index such that $\bar{\bw}(\rho(e)) > \bar{\bw}(e)$ and $\bx^\ast(\rho(e)) > 0$, the expected weight of item $\rho(e)$ is larger than that of item $e$ and the item contributes to $\bx^\ast$. For simplicity of exposition, we assume that item $1$ contributes to the optimal basis $\bx^\ast$, $\bx^\ast(1) > 0$. This guarantees that $\rho(e)$ is properly defined for all items but item $1$. We assume that $\rho(1) = 0$.

Our regret decomposition is based on rewriting the difference in the expected returns of bases $\bx^\ast$ and $\bx$ as the sum of the differences in the returns of intermediate solutions, which are obtained by interleaving the bases. We refer to these solutions as \emph{augmentations}. A \emph{$k$-augmentation} is a vector $\by_k \in [0, 1]^L$ such that:
\begin{align}
  \by_k(i) = \left\{
  \begin{array}{ll@{\quad}}
  f(A_j) - f(A_{j - 1}) & i \in A_k \text{ and } a_j = i \\
  f(A_k + A^\ast_i) - f(A_k + A^\ast_{i - 1}) & i \notin A_k
  \end{array}
  \right|
  \qquad i = 1, \dots, L.
  \label{eq:augmentation}
\end{align}
It can be also viewed as a basis generated by $\greedy$, which first selects $k$ suboptimal items $a_1, \dots, a_k$ and then the remaining $L - K$ items, ordered from $1$ to $L$. Now we prove our first lemma.

\begin{lemma}
\label{lem:gain difference} For any $k$, the difference of two consecutive augmentations $\by_{k - 1}$ and $\by_k$ satisfies:
\begin{align*}
  \by_{k - 1}(i) - \by_k(i) \ \ \left\{
  \begin{array}{ll@{\quad}}
  = 0 & i \in A_{k - 1} \\
  \leq 0 & i = a_k \\
  \geq 0 & i \notin A_k
  \end{array}
  \right|
  \qquad i = 1, \dots, L.
\end{align*}
\end{lemma}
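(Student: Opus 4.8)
The plan is to reduce every case to the \emph{diminishing returns} form of submodularity: for any $S \subseteq T \subseteq E$ and any $e \in E - T$ one has $f(S + e) - f(S) \geq f(T + e) - f(T)$. First I would record this fact, deriving it from the four-point inequality $f(X) + f(Y) \geq f(X \cup Y) + f(X \cap Y)$ by taking $X = S + e$ and $Y = T$ (so $X \cup Y = T + e$ and $X \cap Y = S$). I would also use the trivial identity $A^\ast_i = A^\ast_{i - 1} + i$, which holds because the ground set is ordered so that $A^\ast_i = \set{1, \dots, i}$, together with the fact that $a_1, \dots, a_L$ is a permutation of $E$, so $a_k \notin A_{k - 1}$.

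Then I would split on the three cases in the statement. If $i \in A_{k - 1}$, write $i = a_j$ with $j \leq k - 1$; both definitions in \eqref{eq:augmentation} then use the first branch with the same index $j$, so $\by_{k - 1}(i) = f(A_j) - f(A_{j - 1}) = \by_k(i)$ and the difference is zero. If $i = a_k$, then $i \notin A_{k - 1}$ but $i \in A_k$, so $\by_{k - 1}(i)$ uses the second branch and $\by_k(i)$ the first; using $A_k = A_{k - 1} + a_k$ and $A^\ast_i = A^\ast_{i - 1} + i$ with $i = a_k$, both become marginal gains of $a_k$, namely $\by_{k - 1}(i) = f(A_{k - 1} + A^\ast_{i - 1} + a_k) - f(A_{k - 1} + A^\ast_{i - 1})$ and $\by_k(i) = f(A_{k - 1} + a_k) - f(A_{k - 1})$; since $A_{k - 1} \subseteq A_{k - 1} + A^\ast_{i - 1}$ and $a_k$ lies outside both, diminishing returns gives $\by_k(i) \geq \by_{k - 1}(i)$, i.e.\ $\by_{k - 1}(i) - \by_k(i) \leq 0$.

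For the last case $i \notin A_k$ (hence $i \notin A_{k - 1}$ and $i \neq a_k$), both entries use the second branch, and I would sub-split on the position of $a_k$ in the optimal order. If $a_k < i$, then $a_k \in A^\ast_{i - 1} \subseteq A^\ast_i$, so $A_k + A^\ast_i = A_{k - 1} + A^\ast_i$ and $A_k + A^\ast_{i - 1} = A_{k - 1} + A^\ast_{i - 1}$, which forces $\by_k(i) = \by_{k - 1}(i)$. If $a_k > i$, then $a_k \notin A^\ast_i$, and writing $A^\ast_i = A^\ast_{i - 1} + i$ both entries become marginal gains of $i$: $\by_{k - 1}(i) = f(A_{k - 1} + A^\ast_{i - 1} + i) - f(A_{k - 1} + A^\ast_{i - 1})$ and $\by_k(i) = f(A_{k - 1} + a_k + A^\ast_{i - 1} + i) - f(A_{k - 1} + a_k + A^\ast_{i - 1})$; since $A_{k - 1} + A^\ast_{i - 1} \subseteq A_{k - 1} + a_k + A^\ast_{i - 1}$ and $i$ is outside both, diminishing returns gives $\by_{k - 1}(i) \geq \by_k(i)$, i.e.\ $\by_{k - 1}(i) - \by_k(i) \geq 0$.

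I expect the only real subtlety to be the bookkeeping: keeping straight which branch of \eqref{eq:augmentation} each $\by_k(i)$ falls into, using $A^\ast_i = A^\ast_{i - 1} + i$, and handling the sub-case split by whether $a_k$ precedes or follows $i$ in the optimal order. Once each entry is rewritten as a marginal gain over an explicit set, one application of diminishing returns closes each of the two non-trivial cases and no further estimates are needed.
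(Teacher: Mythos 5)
Your proposal is correct and follows essentially the same route as the paper: a three-way case split on $i$, with the two non-trivial cases closed by submodularity applied to the marginal gains of $a_k$ (resp.\ $i$) over nested sets. The only cosmetic differences are that you spell out the diminishing-returns form explicitly and sub-split the case $i \notin A_k$ on whether $a_k \in A^\ast_{i-1}$, which the paper avoids since the submodularity inequality holds (as an equality) in that degenerate sub-case anyway.
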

\begin{proof}
First, let $i = a_j \in A_{k - 1}$. Then by definition \eqref{eq:augmentation}:
\begin{align}
  \by_{k - 1}(i) - \by_k(i) = f(A_j) - f(A_{j - 1}) - (f(A_j) - f(A_{j - 1})) = 0.
\end{align}
Second, let $i = a_k$. Then:
\begin{align}
  \by_{k - 1}(i) - \by_k(i)
  & = f(A_{k - 1} + A^\ast_i) - f(A_{k - 1} + A^\ast_{i - 1}) - (f(A_k) - f(A_{k - 1})) \nonumber \\
  & = f(A_k + A^\ast_{i - 1}) - f(A_{k - 1} + A^\ast_{i - 1}) - (f(A_k) - f(A_{k - 1})) \nonumber \\
  & \leq 0.
\end{align}
The first equality is due to definition \eqref{eq:augmentation}. The second equality follows from the assumption that $i = a_k$. The inequality is due to the submodularity of $f$. Finally, let $i \notin A_k$. Then:
\begin{align}
  \by_{k - 1}(i) - \by_k(i)
  & = f(A_{k - 1} + A^\ast_i) - f(A_{k - 1} + A^\ast_{i - 1}) -
  (f(A_k + A^\ast_i) - f(A_k + A^\ast_{i - 1})) \nonumber \\
  & \geq 0.
\end{align}
The equality is due to definition \eqref{eq:augmentation}. The inequality is due to the submodularity of $f$.
\end{proof}

\noindent Lemma~\ref{lem:gain difference} says that $\by_{k - 1}(a_k) - \by_k(a_k)$ is the only non-positive entry in $\by_{k - 1} - \by_k$. Since $\by_{k - 1}$ and $\by_k$ are bases, $\displaystyle \sum_{e = 1}^L \by_{k - 1}(i) = \sum_{e = 1}^L \by_k(i) = K$, it follows that $\displaystyle \by_{k - 1}(a_k) - \by_k(a_k) = - \sum_{i \notin A_k} [\by_{k - 1}(i) - \by_k(i)]$. The quantity $\by_{k - 1}(i) - \by_k(i)$ can be viewed as a fraction of item $i$ in $\by_{k - 1}$ exchanged for item $a_k$ in $\by_k$. In the rest of our analysis, we represented these fractions as a vector:
\begin{align}
  \delta(a_k, i) = \max\set{\by_{k - 1}(i) - \by_k(i), 0}.
  \label{eq:exchange}
\end{align}

\begin{lemma}
\label{lem:return difference} For any $k$, the difference in the expected returns of augmentations $\by_{k - 1}$ and $\by_k$ is bounded as:
\begin{align*}
  \langle\bar{\bw}, \by_{k - 1} - \by_k\rangle \leq
  \sum_{e^\ast = 1}^{\rho(a_k)} \Delta_{a_k, e^\ast} \delta(a_k, e^\ast).
\end{align*}
\end{lemma}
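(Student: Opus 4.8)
The plan is to compute $\langle\bar{\bw}, \by_{k-1} - \by_k\rangle$ exactly and then discard the terms that can only help the inequality. By Lemma~\ref{lem:gain difference} the vector $\by_{k-1} - \by_k$ vanishes on $A_{k-1}$, is non-positive only in coordinate $a_k$, and is non-negative off $A_k$; moreover, as noted right after that lemma, $\by_{k-1}(a_k) - \by_k(a_k) = -\sum_{i \notin A_k}\delta(a_k, i)$ while $\by_{k-1}(i) - \by_k(i) = \delta(a_k, i)$ for $i \notin A_k$. Substituting these into the inner product and collecting the coefficient of each $\delta(a_k, i)$ gives
\[
\langle\bar{\bw}, \by_{k-1} - \by_k\rangle
= \sum_{i \notin A_k}\bigl(\bar{\bw}(i) - \bar{\bw}(a_k)\bigr)\,\delta(a_k, i)
= \sum_{i \notin A_k}\Delta_{a_k, i}\,\delta(a_k, i).
\]

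The one nontrivial step is to show that $\delta(a_k, i) > 0$ forces $\bx^\ast(i) > 0$. For $i \in A_{k-1}$ and $i = a_k$ this is vacuous, since $\delta(a_k, i) = 0$ there by Lemma~\ref{lem:gain difference}, so take $i \notin A_k$. Applying submodularity of $f$ to the sets $A^\ast_i$ and $A_{k-1} + A^\ast_{i-1}$ (whose intersection is $A^\ast_{i-1}$ and union is $A_{k-1} + A^\ast_i$, because $i \notin A_{k-1}$) yields $\by_{k-1}(i) = f(A_{k-1} + A^\ast_i) - f(A_{k-1} + A^\ast_{i-1}) \leq f(A^\ast_i) - f(A^\ast_{i-1}) = \bx^\ast(i)$. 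Hence if $\bx^\ast(i) = 0$ then $\by_{k-1}(i) = 0$, and since $\by_k \geq 0$ we get $\delta(a_k, i) = \max\{-\by_k(i), 0\} = 0$. This is the place where the structure of the polymatroid enters, and I expect it to be the main obstacle; the rest is bookkeeping.

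Finally I would prune the sum from the first display. Any term with $\Delta_{a_k, i} \leq 0$ is $\leq 0$ (since $\delta(a_k, i) \geq 0$), so dropping it only increases the sum; each surviving index has $\delta(a_k, i) > 0$ and $\Delta_{a_k, i} > 0$, i.e. $\bx^\ast(i) > 0$ (by the previous paragraph) and $\bar{\bw}(i) > \bar{\bw}(a_k)$, hence $i \leq \rho(a_k)$ by the very definition of $\rho(a_k)$. Conversely, every $e^\ast \in \set{1, \dots, \rho(a_k)}$ satisfies $\bar{\bw}(e^\ast) \geq \bar{\bw}(\rho(a_k)) > \bar{\bw}(a_k)$, so $\Delta_{a_k, e^\ast}\delta(a_k, e^\ast) \geq 0$ and extending the sum to run over all of $\set{1, \dots, \rho(a_k)}$ only adds nonnegative terms, which gives $\langle\bar{\bw}, \by_{k-1} - \by_k\rangle \leq \sum_{e^\ast=1}^{\rho(a_k)}\Delta_{a_k, e^\ast}\delta(a_k, e^\ast)$. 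The degenerate case $a_k = 1$ is consistent: then $\rho(1) = 0$, the right-hand side is $0$, and the left-hand side is $\leq 0$ because $\bar{\bw}(1)$ is maximal, so $\Delta_{1, i} \leq 0$ for all $i$.
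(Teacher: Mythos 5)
Your proof is correct and follows essentially the same route as the paper's: compute $\langle\bar{\bw}, \by_{k-1} - \by_k\rangle$ exactly via Lemma~\ref{lem:gain difference}, drop the non-positive gaps, and use $\bx^\ast(e^\ast) = 0 \Rightarrow \delta(a_k, e^\ast) = 0$ to restrict the sum to $e^\ast \leq \rho(a_k)$. The only difference is that you spell out the submodularity argument ($\by_{k-1}(i) \leq \bx^\ast(i)$ via the sets $A^\ast_i$ and $A_{k-1} + A^\ast_{i-1}$) for that last implication, which the paper merely asserts.
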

\begin{proof}
The claim is proved as:
\begin{align}
  \langle\bar{\bw}, \by_{k - 1} - \by_k\rangle
  & = \sum_{e^\ast \notin A_k} \bar{\bw}(e^\ast) \delta(a_k, e^\ast) -
  \bar{\bw}(a_k) \sum_{e^\ast \notin A_k} \delta(a_k, e^\ast) \nonumber \\
  & = \sum_{e^\ast \notin A_k} \underbrace{(\bar{\bw}(e^\ast) -
  \bar{\bw}(a_k))}_{\Delta_{a_k, e^\ast}} \delta(a_k, e^\ast) \nonumber \\
  & \leq \sum_{e^\ast = 1}^{a_k - 1} \I{e^\ast \notin A_k}
  \Delta_{a_k, e^\ast} \delta(a_k, e^\ast) \nonumber \\
  & = \sum_{e^\ast = 1}^{\rho(a_k)} \Delta_{a_k, e^\ast} \delta(a_k, e^\ast).
\end{align}
The first two steps follow from Lemma~\ref{lem:gain difference} and the subsequent discussion. Then we neglect the negative gaps. Finally, because $f$ is monotonic and submodular, $\delta(a_k, e^\ast) = 0$ for any $e^\ast \notin A_k$ such that $\bx^\ast(e^\ast) = 0$. As a result, we can restrict the scope of the summation over $e^\ast$ to between $1$ and $\rho(a_k)$.
\end{proof}

\noindent Now we are ready to prove our main lemma.

\begin{theorem}
\label{thm:regret decomposition} The expected regret of choosing any basis $\bx$ in episode $t$ is bounded as:
\begin{align*}
  R(\bx, \bar{\bw}) \leq
  \sum_{e = 1}^L \sum_{e^\ast = 1}^{\rho(e)} \Delta_{e, e^\ast} \delta(e, e^\ast),
\end{align*}
where $\delta(e, e^\ast)$ is the fraction of item $e^\ast$ exchanged for item $e$ in episode $t$, and is defined in \eqref{eq:exchange}. Moreover, when $\delta(e, e^\ast) > 0$, $\opm$ observes the weight of item $e$ and $U_t(e) \geq U_t(e^\ast)$. Finally:
\begin{align*}
  \forall t: \sum_{e = 1}^L \sum_{e^\ast = 1}^{\rho(e)} \delta(e, e^\ast) \leq K, \qquad
  \forall t, e \in E: \sum_{e^\ast = 1}^{\rho(e)} \delta(e, e^\ast) \leq 1.
\end{align*}
\end{theorem}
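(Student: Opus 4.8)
The plan is to derive the bound by telescoping through the augmentations $\by_0, \by_1, \dots, \by_L$ and then invoking Lemmas~\ref{lem:gain difference} and~\ref{lem:return difference}, which already do the structural work. First I would observe that the two extreme augmentations are exactly the bases of interest: setting $k = 0$ in \eqref{eq:augmentation} (so $A_0 = \emptyset$) gives $\by_0(i) = f(A^\ast_i) - f(A^\ast_{i-1}) = \bx^\ast(i)$, while setting $k = L$ (so $A_L = E$) gives $\by_L(i) = f(A_j) - f(A_{j-1}) = \bx(i)$ for the index $j$ with $a_j = i$. Hence $R(\bx, \bar{\bw}) = \langle \bar{\bw}, \by_0 - \by_L \rangle = \sum_{k=1}^L \langle \bar{\bw}, \by_{k-1} - \by_k \rangle$, and applying Lemma~\ref{lem:return difference} to each summand yields $R(\bx, \bar{\bw}) \leq \sum_{k=1}^L \sum_{e^\ast=1}^{\rho(a_k)} \Delta_{a_k, e^\ast} \delta(a_k, e^\ast)$. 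Since $a_1, \dots, a_L$ is a permutation of $E$, re-indexing by $e = a_k$ gives the first claim.

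Next, for the observation statement, fix $e = a_k$ with $\delta(e, e^\ast) > 0$. By definition \eqref{eq:exchange} this means $\by_{k-1}(e^\ast) - \by_k(e^\ast) > 0$, and Lemma~\ref{lem:gain difference} shows that the only entries of $\by_{k-1} - \by_k$ that can be positive are those indexed outside $A_k$; hence $e^\ast \notin A_k = \set{a_1, \dots, a_k}$, so $e^\ast = a_m$ for some $m > k$, and since the items are ordered by decreasing UCB, $U_t(e) = U_t(a_k) \geq U_t(a_m) = U_t(e^\ast)$. For the observation itself I would use the identity noted after Lemma~\ref{lem:gain difference}: $\by_k(a_k) - \by_{k-1}(a_k) = \sum_{i \notin A_k} [\by_{k-1}(i) - \by_k(i)] \geq \by_{k-1}(e^\ast) - \by_k(e^\ast) > 0$, and since $\by_{k-1}(a_k) = f(A_{k-1} + A^\ast_{a_k}) - f(A_{k-1} + A^\ast_{a_k-1}) \geq 0$ by monotonicity, we get $\bx(a_k) = \by_k(a_k) > 0$; thus $\opm$ observes item $e$.

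Finally, the two counting bounds both rest on a single identity. By Lemma~\ref{lem:gain difference}, $\delta(a_k, i) = 0$ unless $i \notin A_k$, where $\delta(a_k, i) = \by_{k-1}(i) - \by_k(i) \geq 0$; since $\by_{k-1}$ and $\by_k$ are bases (their entries both sum to $K$) and coincide on $A_{k-1}$, summing $\sum_i (\by_{k-1}(i) - \by_k(i)) = 0$ over the three index blocks gives $\sum_{e^\ast=1}^L \delta(a_k, e^\ast) = \by_k(a_k) - \by_{k-1}(a_k)$. For the per-item bound, $\by_{k-1}(a_k) \geq 0$ and $\by_k(a_k) = f(A_k) - f(A_{k-1}) \leq 1$ (by the unit-increment assumption on $f$) give $\sum_{e^\ast=1}^L \delta(a_k, e^\ast) \leq 1$, and truncating the sum at $\rho(a_k)$ only decreases it. For the aggregate bound, summing over $k$ and using $\sum_k \by_k(a_k) = \sum_{e \in E} \bx(e) = K$ together with $\by_{k-1}(a_k) \geq 0$ gives $\sum_{k=1}^L \sum_{e^\ast=1}^L \delta(a_k, e^\ast) = K - \sum_k \by_{k-1}(a_k) \leq K$; again restricting each inner sum to $e^\ast \leq \rho(e)$ only helps.

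Since the two substantive lemmas carry the structural load, the remaining obstacle is essentially careful bookkeeping: pinning down the endpoints $\by_0 = \bx^\ast$ and $\by_L = \bx$ of the telescoping chain, and — for the counting bounds — exploiting that each augmentation is a genuine basis and that $\by_{k-1}(a_k) \geq 0$ by monotonicity to collapse $\sum_i \delta(a_k, i)$ first into $\by_k(a_k) - \by_{k-1}(a_k)$ and then into $K$. One should also be mindful that the sums in the statement run only up to $\rho(e)$ rather than over all of $E$; because every $\delta(e, e^\ast)$ is non-negative, bounding the full sums (as above) suffices.
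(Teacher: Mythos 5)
Your proposal is correct and follows essentially the same route as the paper: telescoping the regret through the augmentations $\by_0 = \bx^\ast, \dots, \by_L = \bx$, invoking Lemma~\ref{lem:return difference} termwise, deducing $U_t(e) \geq U_t(e^\ast)$ and $\bx(e) \geq \delta(e, e^\ast) > 0$ from the sign pattern in Lemma~\ref{lem:gain difference}, and obtaining the counting bounds from $\sum_{e^\ast} \delta(e, e^\ast) \leq \bx(e) \leq 1$ and $\sum_{e} \bx(e) = K$. Your bookkeeping is in places more explicit than the paper's (e.g., deriving $\sum_i \delta(a_k, i) = \by_k(a_k) - \by_{k-1}(a_k)$ from the basis identity rather than asserting the inequality), but the substance is identical.
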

\begin{proof}
The first claim is proved as follows:
\begin{align}
  R(\bx, \bar{\bw}) =
  \langle\bar{\bw}, \bx^\ast - \bx\rangle =
  \sum_{k = 1}^L \langle\bar{\bw}, \by_{k - 1} - \by_k\rangle \leq
  \sum_{k = 1}^L \sum_{e^\ast = 1}^{\rho(a_k)} \Delta_{a_k, e^\ast} \delta(a_k, e^\ast).
\end{align}
First, we rewrite the regret $\langle\bar{\bw}, \bx^\ast - \bx\rangle$ as the sum of the differences in $(L + 1)$ $k$-augmentations, from $\by_0$ to $\by_L$. Note that $\by_0 = \bx^\ast$ and $\by_L = \bx$. Second, we bound each term is the sum using Lemma~\ref{lem:return difference}. Finally, we replace the sum over all indices $k$ by the sum over all items $e$.

The second claim is proved  as follows. Let $\delta(e, e^\ast) > 0$. Then $\opm$ is guaranteed to observe the weight of item $e$ because $\bx(e) \geq \delta(e, e^\ast) > 0$. Furthermore, let $U_t(e) < U_t(e^\ast)$. Then $\opm$ chooses item $e^\ast$ before item $e$, and $\delta(e, e^\ast) = 0$ by Lemma~\ref{lem:gain difference}. This is a contradiction because $\delta(e, e^\ast) > 0$ by our assumption. As a result, it must be true that $\delta(e, e^\ast) > 0$ implies $U_t(e) \geq U_t(e^\ast)$.

The last two inequalities follow from two observations. First, $\sum_{e^\ast = 1}^{\rho(e)} \delta(e, e^\ast) \leq \bx(e)$ for any basis $\bx$ and item $e$, the sum of the contributions from items $e^\ast$ to $e$ cannot be larger than the total contribution of item $e$ in $\bx$. Second, by the definitions in Section~\ref{sec:polymatroids}, $\bx(e) \leq 1$ and $\sum_{e \in E} \bx(e) = K$ for any basis $\bx$ and item $e$.
\end{proof}

\noindent Note that $\delta(e, e^\ast)$ is a random variable that depends on the basis $\bx_t$ in episode $t$. To stress this dependence, we denote it by $\delta_t(e, e^\ast)$ in the rest of our analysis.

\subsection{Upper Bounds}
\label{sec:upper bounds}

Our first result is a gap-dependent bound. We prove a gap-free bound in sequel.

\begin{theorem}[gap-dependent bound]
\label{thm:gap-dependent} The expected cumulative regret of $\opm$ is bounded as:
\begin{align*}
  R(n) \leq
  \sum_{e = 1}^L \frac{16}{\Delta_{e, \rho(e)}} \log n +
  \sum_{e = 1}^L \sum_{e^\ast = 1}^{\rho(e)} \Delta_{e, e^\ast} \frac{4}{3} \pi^2.
\end{align*}
\end{theorem}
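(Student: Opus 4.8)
The plan is to combine the per-episode regret decomposition of Theorem~\ref{thm:regret decomposition} with a standard UCB concentration argument, charging the $\log n$ exploration cost to each item $e$ through the events $\{\delta_t(e,e^\ast) > 0\}$. From Theorem~\ref{thm:regret decomposition} we have
\begin{align}
  R(n) \leq \sum_{t=1}^n \E{\sum_{e=1}^L \sum_{e^\ast=1}^{\rho(e)} \Delta_{e,e^\ast} \delta_t(e,e^\ast)}{}.
\end{align}
The key structural fact to exploit is that $\delta_t(e,e^\ast)>0$ forces two things simultaneously: $\opm$ observes the weight of item $e$ in episode $t$ (so $T_t(e)$ increments), and $U_t(e)\geq U_t(e^\ast)$. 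The latter inequality, together with the fact that the radius $c_{t,s}$ is built to make each UCB a high-probability upper bound on $\bar\bw$, is what lets us argue that $\delta_t(e,e^\ast)>0$ can only happen when item $e$ has been pulled fewer than roughly $8(\log n)/\Delta_{e,e^\ast}^2$ times.

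First I would set up the clean-event decomposition: on the "good" event $\{U_t(e^\ast)\geq\bar\bw(e^\ast)\}$ for the optimal item $e^\ast$, and on $\{\hat\bw_{T_{t-1}(e)}(e) < \bar\bw(e) + c_{t-1,T_{t-1}(e)}\}$ for item $e$, the inequality $U_t(e)\geq U_t(e^\ast)$ chains into $\bar\bw(e) + 2c_{t-1,T_{t-1}(e)} > \bar\bw(e^\ast)$, i.e. $2c_{t-1,T_{t-1}(e)} > \Delta_{e,e^\ast}$. Plugging in $c_{t-1,s}=\sqrt{2\log(t-1)/s}\leq\sqrt{2\log n/s}$ gives $T_{t-1}(e) < 8(\log n)/\Delta_{e,e^\ast}^2$. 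Since $\delta_t(e,e^\ast)\leq 1$ and each episode where $\delta_t(e,e^\ast)>0$ increments $T_t(e)$ by exactly one, the number of such "good-event" episodes is at most $8(\log n)/\Delta_{e,e^\ast}^2$, contributing at most $\Delta_{e,e^\ast}\cdot 8(\log n)/\Delta_{e,e^\ast}^2 = 8(\log n)/\Delta_{e,e^\ast}$ to the regret. I would then sum over $e^\ast$ from $1$ to $\rho(e)$; the standard trick here is that $\Delta_{e,e^\ast}\geq\Delta_{e,\rho(e)}$ is false in general (gaps to closer items are smaller), so one instead uses a peeling/counting argument over the distinct values of $T_{t-1}(e)$ — exactly as in the matroid-bandit analysis — to show the total good-event contribution of item $e$ is bounded by $16(\log n)/\Delta_{e,\rho(e)}$ rather than a sum of $1/\Delta_{e,e^\ast}$ terms. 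This is where the factor $16$ (versus the naive $8$) comes from.

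For the "bad" event — when either $U_t(e^\ast)<\bar\bw(e^\ast)$ or $\hat\bw$ deviates by more than its radius — I would invoke Hoeffding's inequality in the union-bound form familiar from Auer et al.: $\Pr[\hat\bw_s(e) \geq \bar\bw(e) + c_{t,s}] \leq t^{-4}$ and similarly for the lower tail, so summing over all $t\leq n$ and all $s\leq t$ gives a $\sum_t 2t\cdot t^{-4} = \sum_t 2t^{-3} \leq \pi^2/3$ type bound per ordered pair $(e,e^\ast)$. Each such bad episode contributes at most $\Delta_{e,e^\ast}\delta_t(e,e^\ast)\leq\Delta_{e,e^\ast}$, yielding the $\sum_e\sum_{e^\ast=1}^{\rho(e)}\Delta_{e,e^\ast}\cdot\frac{4}{3}\pi^2$ term (the $\tfrac{4}{3}$ absorbing the factor-of-2 from the two-sided tail plus the two places the radius is used).

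The main obstacle I anticipate is the peeling argument that converts $\sum_{e^\ast\leq\rho(e)} 8(\log n)/\Delta_{e,e^\ast}^2$-style bounds into the single $16(\log n)/\Delta_{e,\rho(e)}$ term: one has to carefully track that for each fixed value of the counter $T_{t-1}(e)=s$, only items $e^\ast$ with $\Delta_{e,e^\ast}^2 < 8(\log n)/s$ can have $\delta_t(e,e^\ast)>0$, and that $\sum_{e^\ast}\delta_t(e,e^\ast)\leq 1$ (the second inequality of Theorem~\ref{thm:regret decomposition}), so each unit increment of $T_t(e)$ costs at most $\max_{e^\ast: \Delta_{e,e^\ast}^2 < 8\log n/s}\Delta_{e,e^\ast}$ in regret; summing $\sqrt{8\log n/s}$ over $s=1,\dots$ up to the threshold where $\Delta_{e,\rho(e)}$ is reached gives the stated bound via an integral comparison. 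Everything else is bookkeeping.
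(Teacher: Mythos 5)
Your proposal is correct and follows essentially the same route as the paper: both start from the regret decomposition of Theorem~\ref{thm:regret decomposition}, split each $\delta_t(e,e^\ast)$ according to whether $T_{t-1}(e)$ exceeds the threshold $\lfloor 8\Delta_{e,e^\ast}^{-2}\log n\rfloor$, handle the over-threshold part with the Hoeffding union bound (the paper's Lemma~\ref{lem:pulls}, giving the $\tfrac{4}{3}\pi^2$ term), and convert the under-threshold counting into $16\Delta_{e,\rho(e)}^{-1}\log n$ by exploiting that $T_{t-1}(e)$ increments whenever some $\delta_t(e,e^\ast)>0$, that $\sum_{e^\ast}\delta_t(e,e^\ast)\leq 1$, and that the gaps are ordered. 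Your peeling-by-integral-comparison is just a cosmetic variant of the paper's telescoping sum (Lemma~\ref{lem:multiple optimal pulls}), and yields the same factor of $16$.
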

\begin{proof}
First, we bound the expected regret in episode $t$ using Theorem~\ref{thm:regret decomposition}:
\begin{align}
  R(n)
  & = \sum_{t = 1}^n \E{\E{R(\bx_t, \bw_t)}{\bw_t}}{\bw_1, \dots, \bw_{t - 1}} \nonumber \\
  & \leq \sum_{t = 1}^n \E{\sum_{e = 1}^L \sum_{e^\ast = 1}^{\rho(e)}
  \Delta_{e, e^\ast} \delta_t(e, e^\ast)}{\bw_1, \dots, \bw_{t - 1}} \nonumber \\
  & = \sum_{e = 1}^L \sum_{e^\ast = 1}^{\rho(e)} \Delta_{e, e^\ast}
  \E{\sum_{t = 1}^n \delta_t(e, e^\ast)}{\bw_1, \dots, \bw_n}.
\end{align}
Second, we bound the regret associated with each item $e$. The key idea is to decompose $\delta_t(e, e^\ast)$ as:
\begin{align}
  \delta_t(e, e^\ast) =
  \delta_t(e, e^\ast) \I{T_{t - 1}(e) \leq \ell_{e, e^\ast}} + \delta_t(e, e^\ast) \I{T_{t - 1}(e) > \ell_{e, e^\ast}}
\end{align}
and then select $\ell_{e, e^\ast}$ appropriately. By Lemma~\ref{lem:pulls} in Appendix, the regret corresponding to $\I{T_{t - 1}(e) > \ell_{e, e^\ast}}$ is bounded as:
\begin{align}
  \sum_{e^\ast = 1}^{\rho(e)} \Delta_{e, e^\ast}
  \E{\sum_{t = 1}^n \delta_t(e, e^\ast) \I{T_{t - 1}(e) > \ell_{e, e^\ast}}}{\bw_1, \dots, \bw_n} \leq
  \sum_{e^\ast = 1}^{\rho(e)} \Delta_{e, e^\ast} \frac{4}{3} \pi^2
\end{align}
when $\ell_{e, e^\ast} = \floors{\frac{8}{\Delta_{e, e^\ast}^2} \log n}$. At the same time, the regret corresponding to $\I{T_{t - 1}(e) \leq \ell_{e, e^\ast}}$ is bounded as:
\begin{align}
  & \sum_{e^\ast = 1}^{\rho(e)} \Delta_{e, e^\ast}
  \E{\sum_{t = 1}^n \delta_t(e, e^\ast) \I{T_{t - 1}(e) \leq \ell_{e, e^\ast}}}
  {\bw_1, \dots, \bw_n} \leq \nonumber \\
  & \qquad \max_{\bw_1, \dots, \bw_n} \left[\sum_{t = 1}^n \sum_{e^\ast = 1}^{\rho(e)}
  \Delta_{e, e^\ast} \delta_t(e, e^\ast) \I{T_{t - 1}(e) \leq \frac{8}{\Delta_{e, e^\ast}^2} \log n}\right].
  \label{eq:trivial regret}
\end{align}
The next step is due to three observations. First, the gaps $\Delta_{e, e^\ast}$ are ordered such that $\Delta_{e, 1} \geq \ldots \geq \Delta_{e, \rho(e)}$. Second, by Theorem~\ref{thm:regret decomposition}, $T_{t - 1}(e)$ increases by one when $\delta_t(e, e^\ast) > 0$, because this event implies that item $e$ is observed. Finally, by Theorem~\ref{thm:regret decomposition}, $\sum_{e^\ast = 1}^{\rho(e)} \delta_t(e, e^\ast) \leq 1$ for all $e$ and $t$. Based on these facts, two of which follow from the structure of a polymatroid, the bound in \eqref{eq:trivial regret} can be bounded from above by:
\begin{align}
  \left[\Delta_{e, 1} \frac{1}{\Delta_{e, 1}^2} + \sum_{e^\ast = 2}^{\rho(e)} \Delta_{e, e^\ast}
  \left(\frac{1}{\Delta_{e, e^\ast}^2} - \frac{1}{\Delta_{e, e^\ast - 1}^2}\right)\right] 8 \log n.
  \label{eq:flush}
\end{align}
By Lemma~\ref{lem:multiple optimal pulls} in Appendix, the above quantity is further bounded by $\frac{16}{\Delta_{e, \rho(e)}} \log n$. Finally, we combine all of our inequalities and get:
\begin{align}
  \sum_{e^\ast = 1}^{\rho(e)} \Delta_{e, e^\ast}
  \E{\sum_{t = 1}^n \delta_t(e, e^\ast)}{\bw_1, \dots, \bw_n} \leq
  \frac{16}{\Delta_{e, \rho(e)}} \log n + \sum_{e^\ast = 1}^{\rho(e)} \Delta_{e, e^\ast} \frac{4}{3} \pi^2.
  \label{eq:per-item regret}
\end{align}
Our main claim is obtained by summing over all items $e$.
\end{proof}

\begin{theorem}[gap-free bound]
\label{thm:gap-free} The expected cumulative regret of $\opm$ is bounded as:
\begin{align*}
  R(n) \leq 8 \sqrt{K L n\log n} + \frac{4}{3} \pi^2 L^2.
\end{align*}
\end{theorem}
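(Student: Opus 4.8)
The plan is to combine the two complementary estimates of the per-item regret that are already present in the proof of Theorem~\ref{thm:gap-dependent}, calibrated by a free scale parameter $\eps$ that separates ``large'' gaps (controlled by the logarithmic gap-dependent estimate) from ``small'' gaps (controlled by a crude estimate that is linear in $n$). Exactly as at the start of the proof of Theorem~\ref{thm:gap-dependent}, I would apply Theorem~\ref{thm:regret decomposition} episode by episode and take expectations to obtain
\begin{align*}
  R(n) \leq \sum_{e = 1}^L \sum_{e^\ast = 1}^{\rho(e)} \Delta_{e, e^\ast}
  \, \E{\textstyle \sum_{t = 1}^n \delta_t(e, e^\ast)}{\bw_1, \dots, \bw_n} ,
\end{align*}
then fix $\eps \in (0, 1]$ and, for each item $e$, split the inner sum over $e^\ast$ according to whether $\Delta_{e, e^\ast} \geq \eps$ or $\Delta_{e, e^\ast} < \eps$.

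For the small-gap indices I would bound $\Delta_{e, e^\ast} \leq \eps$ and then invoke the two structural facts of Theorem~\ref{thm:regret decomposition}, namely $\sum_{e^\ast = 1}^{\rho(e)} \delta_t(e, e^\ast) \leq \bx_t(e)$ and $\sum_{e \in E} \bx_t(e) = K$, to get
\begin{align*}
  \sum_{e = 1}^L \sum_{e^\ast \,:\, \Delta_{e, e^\ast} < \eps} \Delta_{e, e^\ast}
  \, \E{\textstyle \sum_{t = 1}^n \delta_t(e, e^\ast)}{\bw_1, \dots, \bw_n}
  & \leq \eps \, \E{\textstyle \sum_{t = 1}^n \sum_{e \in E} \bx_t(e)}{\bw_1, \dots, \bw_n} \\
  & = \eps K n .
\end{align*}

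For the large-gap indices I would rerun the argument of Theorem~\ref{thm:gap-dependent} essentially verbatim, but with the range of $e^\ast$ restricted to those with $\Delta_{e, e^\ast} \geq \eps$. The structural ingredients that argument uses -- the ordering $\Delta_{e, 1} \geq \ldots \geq \Delta_{e, \rho(e)}$, the bound $\sum_{e^\ast} \delta_t(e, e^\ast) \leq 1$, and the fact that $\delta_t(e, e^\ast) > 0$ forces $T_{t - 1}(e)$ to increment -- all remain valid after shrinking the index set, so in place of \eqref{eq:per-item regret} one gets the per-item estimate $\tfrac{16}{\Delta_{e, \rho_\eps(e)}} \log n + \tfrac{4}{3} \pi^2 \rho_\eps(e)$, where $\rho_\eps(e)$ denotes the largest index with $\Delta_{e, e^\ast} \geq \eps$. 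Using $\Delta_{e, \rho_\eps(e)} \geq \eps$, $\Delta_{e, e^\ast} \leq 1$, and $\rho_\eps(e) \leq L$, this is at most $\tfrac{16}{\eps} \log n + \tfrac{4}{3} \pi^2 L$ per item, hence at most $\tfrac{16 L}{\eps} \log n + \tfrac{4}{3} \pi^2 L^2$ after summing over $e$.

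Adding the small- and large-gap contributions yields $R(n) \leq \eps K n + \tfrac{16 L}{\eps} \log n + \tfrac{4}{3} \pi^2 L^2$, and it only remains to optimize over $\eps$: the choice $\eps = 4 \sqrt{(L \log n) / (K n)}$ minimizes $\eps K n + \tfrac{16 L}{\eps} \log n$ at the value $8 \sqrt{K L n \log n}$, which is exactly the claimed leading term. (If that choice of $\eps$ exceeds $1$, then $K n < 16 L \log n$, and the claim already follows from the trivial bound $R(n) \leq K n \leq 4 \sqrt{K L n \log n}$, since the regret in each episode is at most $K$.) The step I expect to require the most care is the claim that the water-filling/telescoping estimate behind Theorem~\ref{thm:gap-dependent} (i.e.\ behind Lemma~\ref{lem:pulls} and Lemma~\ref{lem:multiple optimal pulls}) carries over unchanged once the $e^\ast$-range is truncated to the large gaps; granting that, the rest is the standard large-/small-gap decomposition followed by an AM--GM optimization of $\eps$.
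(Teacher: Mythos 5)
Your proposal is correct and follows essentially the same route as the paper: the same large-gap/small-gap split at threshold $\eps$, the same reuse of the gap-dependent per-item estimate (which the paper likewise invokes ``similarly to \eqref{eq:per-item regret}'' with $\rho(e)$ replaced by $\rho_\eps(e)$), the same trivial $\eps K n$ bound via $\sum_{e}\sum_{e^\ast}\delta_t(e,e^\ast)\leq K$, and the same choice $\eps = 4\sqrt{L\log n/(Kn)}$. Your extra caveat about $\eps>1$ is harmless but unnecessary, since the intermediate bound $R(n)\leq \tfrac{16}{\eps}L\log n+\eps Kn+\tfrac{4}{3}\pi^2L^2$ holds for every $\eps>0$.
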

\begin{proof}
The main idea is to decompose the expected cumulative regret of $\opm$ into two parts, where the gaps are larger than $\eps$ and at most $\eps$. We analyze each part separately and then select $\eps$ to get the desired result.

Let $\rho_\eps(e)$ be the number of items whose expected weight is higher than that of item $e$ by more than $\eps$ and:
\begin{align}
  Z_{e, e^\ast}(n) = \E{\sum_{t = 1}^n \delta_t(e, e^\ast)}{\bw_1, \dots, \bw_n}.
\end{align}
Then for any $\eps$, the regret of $\opm$ can be decomposed as:
\begin{align}
  R(n) =
  \sum_{e = 1}^L \sum_{e^\ast = 1}^{\rho_\eps(e)} \Delta_{e, e^\ast} Z_{e, e^\ast}(n) +
  \sum_{e = 1}^L \sum_{e^\ast = \rho_\eps(e) + 1}^{\rho(e)} \Delta_{e, e^\ast} Z_{e, e^\ast}(n).
  \label{eq:gap-free split}
\end{align}
The first term can be bounded similarly to \eqref{eq:per-item regret}:
\begin{align}
  \sum_{e = 1}^L \sum_{e^\ast = 1}^{\rho_\eps(e)} \Delta_{e, e^\ast} Z_{e, e^\ast}(n)
  & \leq \sum_{e = 1}^L \frac{16}{\Delta_{e, \rho_\eps(e)}} \log n +
  \sum_{e = 1}^L \sum_{e^\ast = \rho_\eps(e) + 1}^{\rho(e)} \Delta_{e, e^\ast} \frac{4}{3} \pi^2
  \nonumber \\
  & \leq \frac{16}{\eps} L \log n + \frac{4}{3} \pi^2 L^2.
\end{align}
The second term is bounded trivially as:
\begin{align}
  \sum_{e = 1}^L \sum_{e^\ast = \rho_\eps(e) + 1}^{\rho(e)} \Delta_{e, e^\ast} Z_{e, e^\ast}(n) \leq
  \eps K n
\end{align}
because $\displaystyle \sum_{e = 1}^L \sum_{e^\ast = 1}^{\rho(e)} \delta_t(e, e^\ast) \leq K$ in any episode $t$ (Theorem~\ref{thm:regret decomposition}) and $\Delta_{e, e^\ast} \leq \eps$. Finally, we get:
\begin{align}
  R(n) \leq \frac{16}{\eps} L \log n + \eps K n + \frac{4}{3} \pi^2 L^2
\end{align}
and choose $\displaystyle \eps = 4 \sqrt{\frac{L \log n}{K n}}$. This concludes our proof.
\end{proof}

\subsection{Improvement over General Upper Bounds}
\label{sec:upper bound improvement}

$\opm$ is an instance of the UCB algorithm by \cite{gai12combinatorial} for combinatorial semi-bandits (Section~\ref{sec:related work}). So it is natural to ask if our upper bounds on the regret of $\opm$ (Section~\ref{sec:upper bounds}) are tighter than those in stochastic combinatorial semi-bandits \citep{chen13combinatorial,kveton14tight}. In this section, we show that this is the case, by comparing our upper bounds to \cite{kveton14tight}.

Our $O(\sqrt{K L n \log n})$ gap-free upper bound (Theorem~\ref{thm:gap-free}) has the same dependence on $K$, $L$, and $n$ as the upper bound of \cite{kveton14tight} (Theorem 6). The only notable improvement in our analysis is that we reduce the constant at the $\sqrt{n \log n}$ term from $47$ to $8$.

Our $O(L (1 / \Delta) \log n)$ upper bound (Theorem~\ref{thm:gap-dependent}) is tighter by a factor of $K$ than the $O(K L (1 / \Delta) \log n)$ upper bound of \cite{kveton14tight} (Theorem 5). However, our notion of the gap, item-based \eqref{eq:gap}, differs from that of \cite{kveton14tight}, solution-based. So hypothetically, the improvement in our bound may be solely due to a different notion of the gap. In the rest of this section, we argue that this is not the case.

Specifically, we consider the following \emph{uniform matroid bandit}. Let $E = \set{1, \dots, L}$ be a set of $L$ items and the family of independent sets be defined as:
\begin{align}
  \cI = \set{I \subseteq E: \abs{I} \leq K},
  \label{eq:uniform matroid independence}
\end{align}
which means that any set of up to $K$ items is feasible. Then $M = (E, \cI)$ is a  rank-$K$ \emph{uniform matroid}. Let $P$ be a distribution over the weights of the items, where the weight of each item is distributed independently of the other items. The weight of item $e$ is drawn i.i.d. from a Bernoulli distribution with mean:
\begin{align}
  \bar{\bw}(e) = \left\{
  \begin{array}{ll}
  0.5 & e \leq K\\
  0.5 - \Delta & \textrm{otherwise},
  \end{array}
  \right.
  \label{eq:uniform matroid weights}
\end{align}
where $0 < \Delta < 0.5$. Then $B_\text{unif}  = (M, P)$ is our uniform matroid bandit. The optimal solution to $B_\text{unif}$ is $A^\ast = \set{1, \dots, K}$, the first $K$ items with the largest weights.

The key property of $B_\text{unif}$ is that our gaps coincide with those of \cite{kveton14tight}. In particular, for any suboptimal item $e \notin A^\ast$, the difference between the returns of $A^\ast$ and the best suboptimal solution that contains item $e$ is $\Delta$; the same as the difference between the returns of item $e$ and any optimal item $e^\ast \in A^\ast$ \eqref{eq:gap}. Because the gaps $\Delta$ are the same, our $O(L (1 / \Delta) \log n)$ bound is indeed a factor of $K$ tighter than the $O(K L (1 / \Delta) \log n)$ bound of \cite{kveton14tight}.

\subsection{Lower Bounds}
\label{sec:lower bounds}

We prove gap-dependent and gap-free lower bounds on the regret in polymatroid bandits. These bounds are derived on a class of polymatroid bandits that are equivalent to $K$ independent Bernoulli bandits.

Specifically, we consider the following \emph{partition matroid bandit}. Let $E = \set{1, \dots, L}$ be a set of $L$ items and $B_1, \dots, B_K$ be a partition of this set such that $\abs{B_i} = L / K$, where $L / K$ is an integer. Let the family of independent sets be defined as:
\begin{align}
  \cI = \set{I \subseteq E: \left(\forall k: \abs{\cI \cap B_k} \leq 1\right)}.
  \label{eq:partition matroid independence}
\end{align}
Then $M = (E, \cI)$ is a \emph{partition matroid} of rank $K$. Let $P$ be a probability distribution over the weights of the items, where the weight of each item is distributed independently of the other items. The weight of item $e$ is drawn i.i.d. from a Bernoulli distribution with mean:
\begin{align}
  \bar{\bw}(e) = \left\{
  \begin{array}{ll}
  0.5 & \exists k: e = \min_{i \in B_k} i \\
  0.5 - \Delta & \textrm{otherwise},
  \end{array}
  \right.
  \label{eq:partition matroid weights}
\end{align}
where $0 < \Delta < 0.5$. Then $B_\text{part} = (M, P)$ is our partition matroid bandit. The key property of $B_\text{part}$ is that it is equivalent to $K$ independent Bernoulli bandits with $L / K$ arms each. The optimal item in each bandit is the item with the smallest index. So the optimal solution is $A^\ast = \set{e: (\exists k: e = \min_{i \in B_k} i)}$. We also note that all gaps \eqref{eq:gap} are $\Delta$.

To formalize our gap-dependent lower bound, we introduce the notion of \emph{consistent algorithms}. We say that the algorithm is consistent if for any partition matroid bandit, any $e \notin A^\ast$, and any $\alpha > 0$, $\mathbb{E}[T_n(e)] = o(n^\alpha)$, where $T_n(e)$ is the number of times that item $e$ is observed in $n$ episodes. In the rest of our analysis, we focus only on consistent algorithms. This is without loss of generality. In particular, by the definition of consistency, inconsistent algorithms perform poorly on some instances of our problem and therefore cannot achieve logarithmic regret on all instances.

\begin{proposition}
\label{prop:gap-dependent lower bound} For any $L$ and $K$ such that $L / K$ is an integer, and any $\Delta$ such that $0< \Delta < 0.5$, the regret of any consistent algorithm on partition matroid bandit $B_\text{part}$ is bounded from below as:
\begin{align*}
  \liminf_{n \to \infty} \frac{R(n)}{\log n} \geq \frac{L - K}{4 \Delta}.
\end{align*}
\end{proposition}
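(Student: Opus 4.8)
The plan is to reduce $B_\text{part}$ to $K$ independent Bernoulli bandits, exactly as the text already indicates, and then apply the classical asymptotic lower bound of \cite{lai85asymptotically} inside each of them. A basis of the partition matroid selects exactly one item from each block $B_k$, and the semi-bandit feedback reveals the weight of every selected item; hence one episode amounts to one simultaneous pull in each of $K$ Bernoulli bandits, the $k$-th having $L/K$ arms, one with mean $0.5$ (the smallest index in $B_k$) and $L/K - 1$ with mean $0.5 - \Delta$. Running a consistent algorithm on $B_\text{part}$ and restricting attention to a single block yields an algorithm on that block's $(L/K)$-armed Bernoulli bandit that is still consistent in the usual sense, since the consistency hypothesis is stated per item and is preserved under changing $\Delta$ or permuting which item of a block is optimal, and every such perturbed problem is again a partition matroid bandit of the admissible form.

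First I would write the regret exactly. All gaps equal $\Delta$ and every basis has $\bx_t(e) \in \set{0, 1}$, so the regret incurred in episode $t$ equals $\Delta$ times the number of suboptimal items it selects. Summing over episodes,
\begin{align*}
  R(n) = \Delta \sum_{e \notin A^\ast} \mathbb{E}[T_n(e)],
\end{align*}
a sum over the $L - K$ suboptimal items ($L/K - 1$ per block, $K$ blocks), where $T_n(e)$ is the number of times item $e$ is observed.

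Next I would invoke, for each suboptimal item $e$, the Lai--Robbins bound
\begin{align*}
  \liminf_{n \to \infty} \frac{\mathbb{E}[T_n(e)]}{\log n} \geq \frac{1}{D_{\mathrm{KL}}\!\left(0.5 - \Delta \,\middle\|\, 0.5\right)},
\end{align*}
with $D_{\mathrm{KL}}$ the binary relative entropy; the change-of-measure step needs only a perturbed instance in which $e$ becomes optimal, and swapping $e$ with the special item of its block supplies such an instance that remains a valid partition matroid bandit. Combining this with the regret identity above and the elementary estimate $D_{\mathrm{KL}}(p \| q) \leq (p - q)^2 / (q(1 - q))$, which at $q = 1/2$ gives $D_{\mathrm{KL}}(0.5 - \Delta \| 0.5) \leq 4 \Delta^2$, produces
\begin{align*}
  \liminf_{n \to \infty} \frac{R(n)}{\log n} \geq \Delta \, (L - K) \, \frac{1}{4 \Delta^2} = \frac{L - K}{4 \Delta},
\end{align*}
which is the claim.

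The only step requiring genuine care is the reduction: one must verify that consistency on $B_\text{part}$ transfers to each single-block Bernoulli bandit, and that the family of instances over which the Lai--Robbins argument quantifies (those obtained by making a chosen arm optimal) stays inside the admissible class of partition matroid bandits. Once that bookkeeping is settled, the regret decomposition and the relative-entropy estimate are routine.
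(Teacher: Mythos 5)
Your proposal is correct and follows essentially the same route as the paper: reduce $B_\text{part}$ to $K$ independent $(L/K)$-armed Bernoulli bandits, apply the Lai--Robbins bound to each of the $L-K$ suboptimal items, and then upper-bound $\mathrm{KL}(0.5-\Delta, 0.5)$ by $4\Delta^2$ via $\mathrm{KL}(p\|q) \leq (p-q)^2/(q(1-q))$. You are somewhat more explicit than the paper about the regret identity $R(n) = \Delta \sum_{e \notin A^\ast} \mathbb{E}[T_n(e)]$ and about why consistency transfers to each block, but these are elaborations of the same argument, not a different one.
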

\begin{proof}
The theorem is proved as follows:
\begin{align}
  \liminf_{n \to \infty} \frac{R(n)}{\log n}
  & \geq \sum_{k = 1}^K \sum_{e \in B_k - A^\ast}
  \frac{\Delta}{\mathrm{KL}(0.5 - \Delta, 0.5)} \nonumber \\
  & = \frac{(L - K) \Delta}{\mathrm{KL}(0.5 - \Delta, 0.5)} \nonumber \\
  & \geq \frac{L - K}{4 \Delta},
\end{align}
where $\mathrm{KL}(0.5 - \Delta, 0.5)$ is the KL divergence between two Bernoulli variables with the means of $0.5 - \Delta$ and $0.5$. The first inequality is due to an existing lower bound for Bernoulli bandits \citep{lai85asymptotically}, which is applied separately to each part $B_k$. The second inequality follows from $\mathrm{KL}(p, q) \leq \frac{(p - q)^2}{q (1 - q)}$, where $p = 0.5 - \Delta$ and $q = 0.5$.
\end{proof}

\noindent Now we prove a gap-free lower bound.

\begin{proposition}
\label{prop:gap-free lower bound} For any $L$ and $K$ such that $L / K$ is an integer, and any $n > 0$, the regret of any algorithm on partition matroid bandit $B_\text{part}$ is bounded from below as:
\begin{align*}
  R(n) \geq \frac{1}{20} \min(\sqrt{K L n}, K n).
\end{align*}
\end{proposition}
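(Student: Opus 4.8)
The plan is to exploit the decomposition already highlighted in the text: the partition matroid bandit $B_\text{part}$ is equivalent to $K$ mutually independent Bernoulli bandits with $L/K$ arms each. I would apply a classical minimax lower bound for a single multi-armed bandit to each block and then sum the contributions.

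First I would make the decomposition exact. For a partition matroid the feasible set $\Theta$ consists precisely of the bases that pick exactly one element from each block $B_k$, so a basis $\bx$ is in bijection with a tuple $(j_1, \dots, j_K)$ with $j_k \in B_k$; the payoff is $\langle \bw, \bx \rangle = \sum_{k=1}^K \bw(j_k)$; the optimal basis picks $m_k = \min_{i \in B_k} i$ in every block; and the per-episode regret equals $\sum_{k=1}^K (\bw(m_k) - \bw(j_k))$. Semi-bandit feedback reveals exactly $\bw(j_k)$ for each $k$, and the weights are independent across blocks, so after conditioning on the weight realizations and optimal-arm locations of all blocks other than $k$, the algorithm's behaviour inside block $k$ is a (possibly randomised) algorithm for a self-contained $(L/K)$-armed Bernoulli bandit run for $n$ episodes. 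Taking expectations gives $R(n) = \sum_{k=1}^K R_k(n)$, where $R_k(n)$ is the expected cumulative regret incurred inside block $k$.

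Second, I would invoke a classical minimax lower bound for stochastic multi-armed bandits \citep{auer02nonstochastic}: for any algorithm on an $m$-armed Bernoulli bandit run for $n$ rounds, there is a suboptimality gap $\Delta$ of order $\min\{\sqrt{m/n}, 1\}$ (in particular $0 < \Delta < 0.5$) and a placement of the optimal arm for which the regret is at least $\frac{1}{20}\min\{\sqrt{mn}, n\}$. Applying this with $m = L/K$ to each of the $K$ blocks — and noting that since all blocks have the same size $L/K$ and the same horizon $n$, a single value of $\Delta$ is simultaneously optimal for all of them, exactly as the definition of $B_\text{part}$ requires — and then summing over the blocks gives $R(n) \geq K \cdot \frac{1}{20}\min\{\sqrt{(L/K)n}, n\} = \frac{1}{20}\min\{\sqrt{KLn}, Kn\}$, using $K\sqrt{(L/K)n} = \sqrt{KLn}$.

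The \emph{main obstacle} is reconciling the minimax nature of the single-block bound, which only guarantees the existence of a bad location for the optimal arm, with the statement, in which the optimal arm is pinned to the smallest index of each block. I would resolve this with the usual averaging device: place the optimal arm of each block independently and uniformly at random, so that $\mathbb{E}[R(n)] = \sum_{k} \mathbb{E}[R_k(n)]$, and bound each $\mathbb{E}[R_k(n)]$ from below by applying the single-block minimax argument to the algorithm induced on block $k$ (averaging the other blocks out). This produces a deterministic choice of optimal-arm locations achieving at least the average; relabelling the elements within each block — an operation that preserves the partition-matroid structure and the behaviour of the algorithm — then turns that choice into the canonical one, i.e. into the instance $B_\text{part}$ with the $\Delta$ selected above. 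The only genuinely computational ingredient, the Kullback--Leibler/Pinsker estimate behind the single-block bound, I would carry out only far enough to pin down the constant $\frac{1}{20}$ and the admissible range $0 < \Delta < 0.5$; the remainder is bookkeeping.
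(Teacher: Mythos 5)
Your proposal follows essentially the same route as the paper: view $B_\text{part}$ as $K$ independent $(L/K)$-armed Bernoulli bandits, invoke the $\frac{1}{20}\min(\sqrt{mn},n)$ minimax lower bound of Auer et al.\ per block (whose worst-case environment is stochastic), and sum over the $K$ blocks. Your additional care about conditioning on the other blocks and about the placement of the optimal arm only makes explicit steps the paper's proof leaves implicit, so the argument is correct and not genuinely different.
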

\begin{proof}
The matroid bandit $B_\text{part}$ can be viewed as $K$ independent Bernoulli bandits with $L / K$ arms each. By Theorem 5.1 of \cite{AuCBFSch02}, for any time horizon $n$, the gap $\Delta$ can be chosen such that the regret of any algorithm on any of the $K$ bandits is at least $\frac{1}{20} \min(\sqrt{(L / K) n}, n)$. So the regret due to all bandits is at least:
\begin{align}
  K \frac{1}{20} \min\set{\sqrt{(L / K) n}, n} =
  \frac{1}{20} \min\set{\sqrt{K L n}, K n}.
\end{align}
Note that the bound of \cite{AuCBFSch02} is stated for the adversarial setting. However, because the worst-case environment in the proof is stochastic, it also applies to our problem.
\end{proof}

\subsection{Discussion of Theoretical Results}
\label{sec:discussion}

We prove two upper bounds on the expected cumulative regret of $\opm$:
\begin{align*}
  \text{Theorem~\ref{thm:gap-dependent}}: & \qquad O(L (1 / \Delta) \log n), \\
  \text{Theorem~\ref{thm:gap-free}}: & \qquad O(\sqrt{K L n \log n}),
\end{align*}
where the gap is $\Delta = \min\limits_e \min\limits_{e^\ast \leq \rho(e)} \Delta_{e, e^\ast}$. Both bounds are at most linear in $K$ and $L$, and sublinear in $n$. In other words, they scale favorably with all quantities of interest and therefore we expect them to be practical. Our $O(L (1 / \Delta) \log n)$ upper bound matches the lower bound in Proposition~\ref{prop:gap-dependent lower bound} up to a constant and therefore is tight. It is also a factor of $K$ tighter than the $O(K L (1 / \Delta) \log n)$ upper bound of \cite{kveton14tight} for a more general class of problems, stochastic combinatorial semi-bandits (Section~\ref{sec:upper bound improvement}). Our $O(\sqrt{K L n \log n})$ upper bound matches the lower bound in Proposition~\ref{prop:gap-free lower bound} up to a factor of $\sqrt{\log n}$.

Our gap-dependent upper bound has the same form as the bound of \citet{auer02finitetime} for multi-armed bandits. This suggests that the sample complexity of learning the maximum-weight basis of a polymatroid is similar to that of the multi-armed bandit problem. The only major difference is in the definitions of the gaps. In other words, learning in polymatroids is extremely sample efficient.

The key step in our analysis is showing that the difference in the expected returns of bases $\bx^\ast$ and $\bx_t$ can be expressed as the sum of the differences in the expected returns of intermediate solutions, all of which are bases such that the difference in the gains of any two consecutive bases has at most one negative entry. This decomposition is highly non-trivial and is derived based on the submodularity of our problem. An important aspect of our analysis is that the terms $\delta_t(e, e^\ast)$ (Theorem~\ref{thm:regret decomposition}) are not bounded until it is necessary, such as in \eqref{eq:flush}. Therefore, our upper bounds are tight and do not contain quantities that are not native to our problem, such as the maximum number of non-zero entries in $\bx \in \Theta$. In fact, under the assumption that $f(e) \leq 1$ for all items $e \in E$ (Section~\ref{sec:polymatroids}), our notion of complexity $K = f(E)$ is never larger than the maximum number of non-zero entries in any feasible solution $\bx \in \Theta$, a common quantity in the regret bounds of combinatorial bandits \citep{gai12combinatorial,kveton14tight,cesabianchi12combinatorial,audibert14regret}.


\section{Experiments}
\label{sec:experiments}

We conduct three experiments. In Section~\ref{sec:min-cost flow}, we evaluate the tightness of our regret bounds on a synthetic problem. In Section~\ref{sec:MST}, we apply $\opm$ to the problem of learning routing networks. Finally, in Section~\ref{sec:diverse recommendations}, we evaluate $\opm$ on the problem of recommending diverse movies.

All experiments are episodic. In each episode, $\opm$ chooses a basis $\bx_t$, observes the weights of all items that contribute to $\bx_t$, and updates its model of the world. In Sections~\ref{sec:MST} and \ref{sec:diverse recommendations}, the performance of $\opm$ is measured by the \emph{expected per-step return} in $n$ episodes:
\begin{align}
  \frac{1}{n} \E{\sum_{t = 1}^n \langle\bw_t, \bx_t\rangle}{\bw_1, \dots, \bw_n},
  \label{eq:per-step return}
\end{align}
which is the expected cumulative return in $n$ episodes divided by $n$. We choose this metric because we want to report the quality of solutions and not just their regret, the difference from the optimal solution.

We compare $\opm$ to two baselines. The first baseline is the maximum-weight basis $\bx^\ast$ \eqref{eq:optimal arm}. This is our notion of optimality. The second baseline is an $\eps$-greedy policy. The policy is implemented similarly to $\opm$. In particular, it is Algorithm~\ref{alg:bandit} that is modified as follows. In each episode, $U_t(e)$ is set to $\hat{\bw}_{T_{t - 1}(e)}(e)$ for all items $e$ with probability $1 - \eps$. With probability $\eps$, $U_t(e)$ is chosen randomly for all items $e$. The exploration rate is set as $\eps = 0.1$. In all of our experiments, this is the best performing $\eps$-greedy policy from the class of $\eps$-greedy policies where $\eps \in \set{0, 0.1, \dots, 1}$.


\subsection{Minimum-Cost Flow}
\label{sec:min-cost flow}

In the first experiment, we evaluate $\opm$ on a synthetic problem of learning minimum-cost flows. The experiment shows that our $O(L (1 / \Delta) \log n)$ gap-dependent upper bound is practical. We experiment with larger values of $\Delta$. In this setting, our gap-dependent upper bound is tighter than the gap-free one.

\begin{figure*}[t]
  \centering
  \includegraphics[width=6.4in, bb=1.25in 4in 9.25in 6.5in]{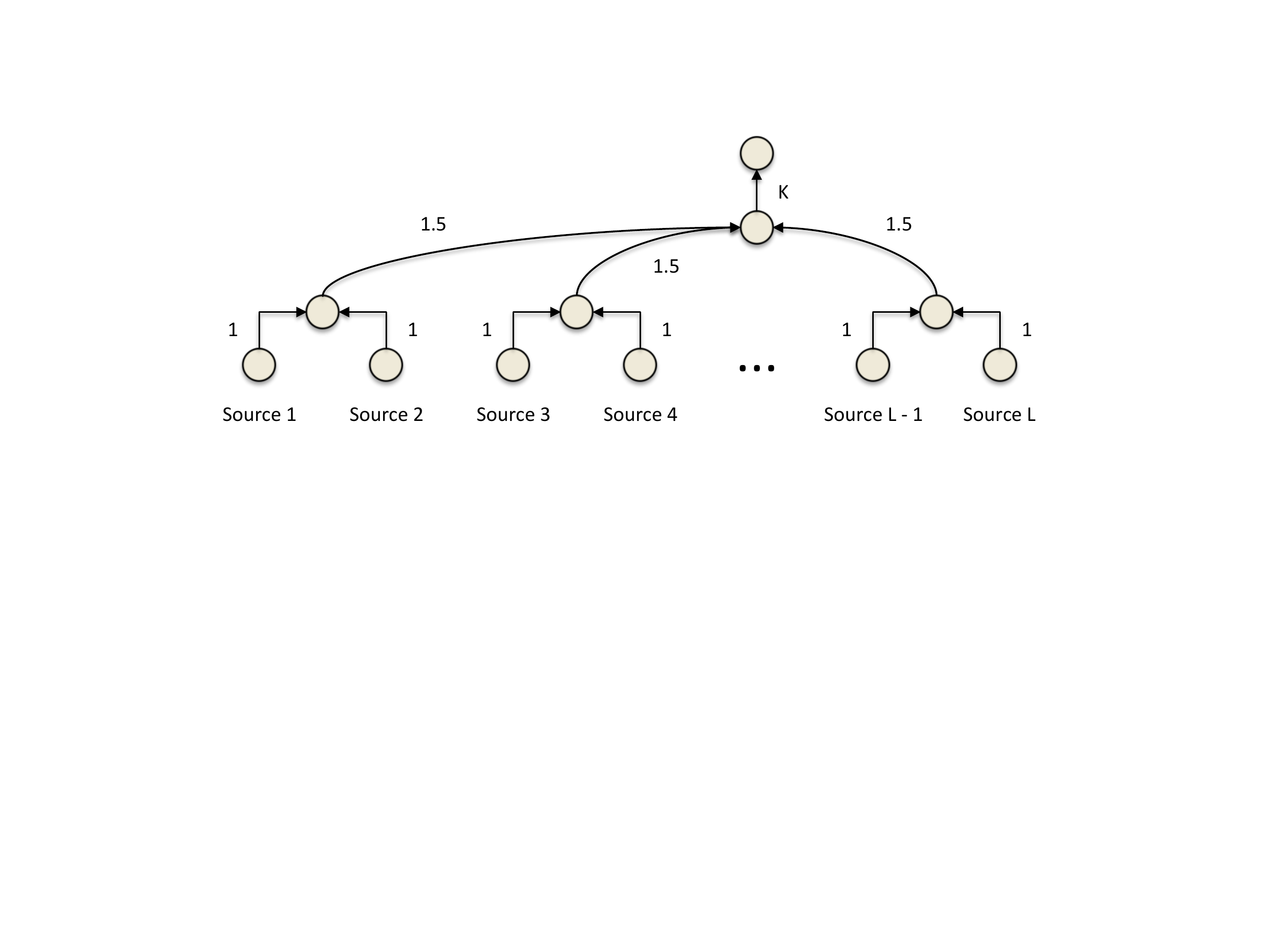}
  \caption{The flow network in Section~\ref{sec:min-cost flow}. The network contains $L$ source nodes and the maximum flow is $K$. The capacity of the link is shown next to the link.}
  \label{fig:flow network}
\end{figure*}

We experiment with a flow network with $L$ source nodes and one sink node. The network is illustrated in Figure~\ref{fig:flow network}. The network is defined by three constraints. First, the maximum flow through any source node is $1$. Second, the maximum flow through any two consecutive source nodes, $e$ and $e + 1$ where $e = 2 i - 1$ for $i \in \set{1, \dots, L / 2}$, is $\frac{3}{2}$. Third, the maximum flow is $K$. We assume that $K$ is an integer multiple of $\frac{3}{2}$. The cost of the flow from source node $e$ is a Bernoulli random variable with mean:
\begin{align}
  \bar{\bw}(e) = \left\{
  \begin{array}{ll}
  0.5 - \Delta / 2 & e \leq \frac{4}{3} K \\
  0.5 + \Delta / 2 & \text{otherwise}.
  \end{array}
  \right.
  \label{eq:flow weight}
\end{align}
Our problem is parametrized by $K$, $L$, and $\Delta$. The optimal solution to the problem is to pass the maximum flow through the first $\frac{4}{3} K$ source nodes.

Our problem can be formulated as minimizing a modular function on a polymatroid. The ground set $E$ are $L$ source nodes. The submodular function $f$ captures the structure of the network and is defined as:
\begin{align}
  f(X) = \min\set{\sum_{i = 1}^{L / 2} \min\set{\I{(2 i - 1) \in X} + \I{2 i \in X}, \frac{3}{2}}, K}.
  \label{eq:flow submodular}
\end{align}
Note that $f(X)$ can be computed in $O(L)$ time, by summing up $L$ indicators. The weight of item $e$ is drawn i.i.d. from a Bernoulli distribution with mean $\bar{\bw}(e)$ in \eqref{eq:flow weight}, independently of the other items.

\begin{figure*}[t]
  \centering
  \includegraphics[width=6.4in, bb=0.5in 4.25in 8.5in 6.75in]{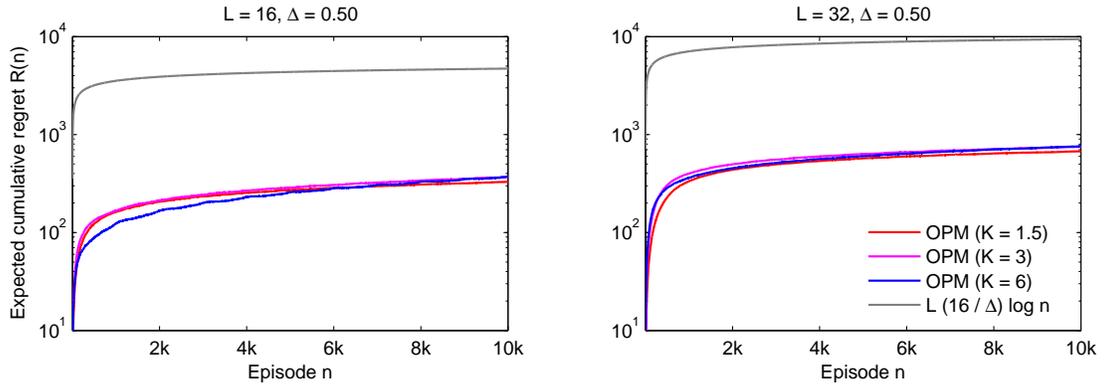}
  \caption{The regret of $\opm$ as a function of the number of episodes $n$. The regret is averaged over $100$ runs.}
  \label{fig:regret time}
\end{figure*}

In Figure~\ref{fig:regret time}, we report the regret of $\opm$ as a function of the number of episodes $n$ for various settings of $K$ and $L$. The gap is $\Delta = 0.5$. We observe three major trends. First, the regret grows on the order of $\log n$, as suggested by our $O(L (1 / \Delta) \log n)$ upper bound. Second, the regret does not change much with $K$. This is consistent with the fact that our bound is independent of $K$. Finally, we note that the bound is surprisingly tight. In particular, for larger values of $n$, it is only about $10$ times larger than the actual regret.

\begin{table*}[t]
  \centering
  \begin{tabular}{rr|rrr|rrr} \hline
    $L$ & $K$ &
    $\Delta$ & Regret $R(n)$ & $L (16 / \Delta) \log n$ &
    $\Delta$ & Regret $R(n)$ & $L (16 / \Delta) \log n$ \\ \hline
    16 & 1.50 &
    0.5 & $329.1 \pm 2.5$ & 4,716 &
    0.25 & $577.6 \pm 4.1$ & 9,431 \\
    16 & 3.00 &
    0.5 & $368.6 \pm 3.4$ & 4,716 &
    0.25 & $599.7 \pm 4.3$ & 9,431 \\
    16 & 6.00 &
    0.5 & $373.3 \pm 4.8$ & 4,716 &
    0.25 & $546.1 \pm 5.6$ & 9,431 \\
    32 & 1.50 &
    0.5 & $675.5 \pm 3.0$ & 9,431 &
    0.25 & $1,182.6 \pm 6.2$ & 18,863 \\
    32 & 3.00 &
    0.5 & $748.8 \pm 3.9$ & 9,431 &
    0.25 & $1,356.1 \pm 6.0$ & 18,863 \\
    32 & 6.00 &
    0.5 & $759.6 \pm 4.3$ & 9,431 &
    0.25 & $1,299.3 \pm 6.9$ & 18,863 \\ \hline
  \end{tabular}
  \caption{The regret of $\opm$ as a function of $K$, $L$, and $\Delta$. The regret is averaged over $100$ runs.}
  \label{tab:regret dependency}
\end{table*}

In Table~\ref{tab:regret dependency}, we report the regret of $\opm$ in $10$k episodes for various settings of $K$, $L$, and $\Delta$. We observe that the regret depends on $L$ and $\Delta$ as suggested by our $O(L (1 / \Delta) \log n)$ upper bound. In particular, it does not change much with $K$, and it doubles as we double $L$ or halve $\Delta$. We note again that our upper bound is surprisingly tight, never more than $20$ times larger than the actual regret.


\subsection{Minimum Spanning Tree}
\label{sec:MST}

\begin{figure*}[t]
  \centering
  \includegraphics[width=6.4in, bb=0.5in 4.25in 8.5in 6.75in]{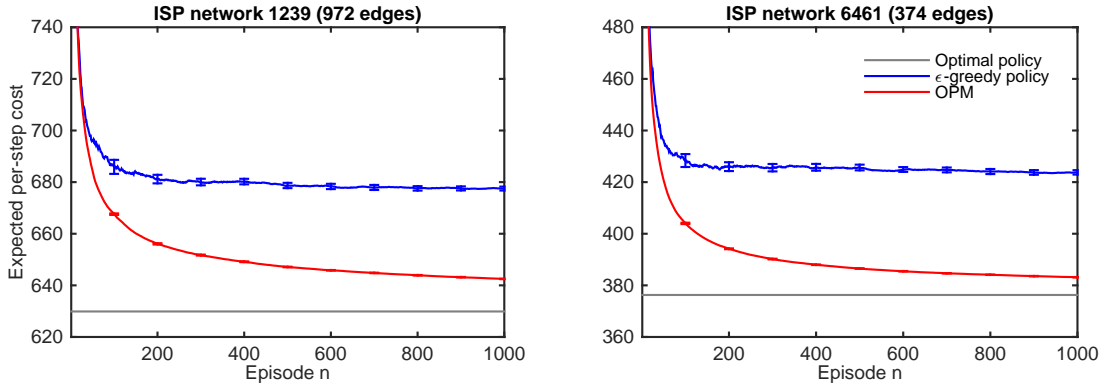}
  \caption{The expected per-step cost of building two minimum spanning trees in up to $10^3$ episodes.}
  \label{fig:latency trends}
\end{figure*}

\begin{table*}[t]
  \centering
  {\small
  \begin{tabular}{l|cc|ccc|rrr} \hline
    ISP & Number & Number & Minimum & Maximum & Average & Optimal & $\eps$-greedy & \\
    network & of nodes & of edges & latency & latency & latency & policy & policy & $\opm$ \\ \hline
    1221 & 108 & 153 & 1 & 17 & 2.78 & 305.00 & $307.26 \pm 0.06$ & $305.82 \pm 0.07$ \\
    1239 & 315 & 972 & 1 & 64 & 3.20 & 629.88 & $677.51 \pm 0.77$ & $642.49 \pm 0.09$ \\
    1755 & 87 & 161 & 1 & 31 & 2.91 & 192.81 & $199.56 \pm 0.13$ & $194.92 \pm 0.07$ \\
    3257 & 161 & 328 & 1 & 47 & 4.30 & 550.85 & $570.21 \pm 0.40$ & $560.11 \pm 0.08$ \\
    3967 & 79 & 147 & 1 & 44 & 5.19 & 306.80 & $321.21 \pm 0.22$ & $308.66 \pm 0.05$ \\
    6461 & 141 & 374 & 1 & 45 & 6.32 & 376.27 & $423.78 \pm 0.85$ & $383.15 \pm 0.08$ \\ \hline
  \end{tabular}
  }
  \caption{The description of $6$ ISP networks from our experiments and the expected per-step costs of building minimum spanning trees on these networks in $10^3$ episodes. All latencies and costs are reported in milliseconds.}
  \label{tab:latency summary}
\end{table*}

In the second experiment, we apply $\opm$ to the problem of learning routing networks for an Internet service provider (ISP). The routing network is a spanning tree \citep{oliveira05survey}. Our goal is to identify the spanning tree that has the lowest expected latency on its edges. Note that this is a minimization problem. Therefore, we refer to the return of a policy as its \emph{cost}.

Our problem can be formulated as a \emph{graphic matroid bandit} \citep{kveton14matroid}, which is a form of a polymatroid bandit. The ground set $E$ are the edges of the graph that represents the topology of the network. We experiment with $6$ networks from the \emph{RocketFuel} dataset \citep{spring04measuring}, with up to $300$ nodes and $10^3$ edges (Table~\ref{tab:latency summary}). A set of edges is \emph{independent} if it forms a forest. The corresponding rank function $f$ is defined as:
\begin{align}
  f(X) = \abs{\text{largest subset of $X$ that is a forest}}.
\end{align}
The value of $f(X)$ can be computed naively by a greedy algorithm in $O(\abs{X}^2)$ time. The latency of edge $e$ in episode $t$ is:
\begin{align}
  \bw_t(e) = \bar{\bw}(e) - 1 + \eps,
\end{align}
where $\bar{\bw}(e)$ is the expected latency, which is recorded in our dataset; and $\eps \sim \mathrm{Exp}(1)$ is exponential noise. The latency $\bar{\bw}(e)$ ranges from $1$ to $64$ milliseconds. Our noise model is motivated by the observation that the latency in ISP networks can be mostly explained by geographical distances \citep{choi04analysis}, the expected latency $\bar{\bw}(e)$. The noise tends to be small, on the order of a few hundred microseconds, and it is unlikely to cause high latency.

In Figure~\ref{fig:latency trends}, we report our results on two largest ISP networks. We observe two trends. First, the cost of $\opm$ approaches that of the optimal solution $\bx^\ast$ as the number of episodes increases. Second, $\opm$ performs better than the $\eps$-greedy policy in less than $10$ episodes. The costs of all policies on all networks are reported in Table~\ref{tab:latency summary}. We observe that $\opm$ outperforms the $\eps$-greedy policy, typically by a large margin.

$\opm$ learns quickly because our networks are sparse. In particular, the number of edges in each network is never more than four times larger than the number of edges in its spanning tree. So theoretically, each edge can be observed at least once in four episodes and $\opm$ learns quickly the mean latency of each edge.


\subsection{Diverse recommendations}
\label{sec:diverse recommendations}

\begin{figure*}[t]
  \centering
  \hspace{-0.05in}
  \raisebox{-0.99in}{\includegraphics[width=2.8in, bb=2.5in 4.25in 6in 6.75in]{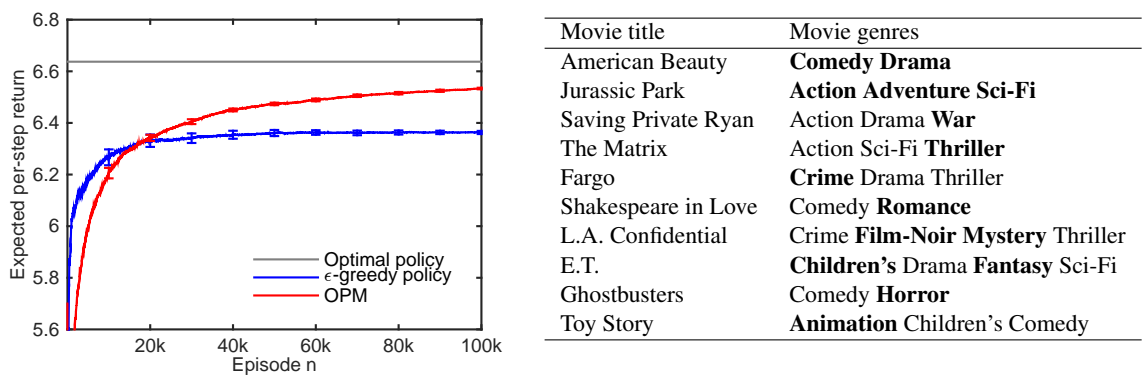}}
  \
  {\small
  \begin{tabular}{ll} \hline
    Movie title & Movie genres \\ \hline
    American Beauty & {\bf Comedy} {\bf Drama} \\
    Jurassic Park & {\bf Action} {\bf Adventure} {\bf Sci-Fi} \\
    Saving Private Ryan & Action Drama {\bf War} \\
    The Matrix & Action Sci-Fi {\bf Thriller} \\
    Fargo & {\bf Crime} Drama Thriller \\
    Shakespeare in Love & Comedy {\bf Romance} \\
    L.A. Confidential & Crime {\bf Film-Noir} {\bf Mystery} Thriller \\
    E.T. & {\bf Children's} Drama {\bf Fantasy} Sci-Fi \\
    Ghostbusters & Comedy {\bf Horror} \\
    Toy Story & {\bf Animation} Children's Comedy \\ \hline
  \end{tabular}
  }
  \caption{\textbf{Left}. The return of three movie recommendation policies in up to $10^5$ episodes. \textbf{Right}. Ten most popular movies in the optimal solution, $\set{e: \bx^\ast(e) > 0}$. These movies are shown in the order of decreasing popularity. The movie genre is highlighted if the associated movie is the most popular movie in that genre.}
  \label{fig:recommendation trends}
\end{figure*}

In the third experiment, $\opm$ is evaluated as a movie recommender. The recommender is used repeatedly by simulated users and the goal is to learn how to recommended diverse movies that maximize the satisfaction of an average user (Section~\ref{sec:optimization}). A system like this could be used in practice to identify trending movies.

We experiment with the \emph{MovieLens} dataset \citep{movielens}, a dataset of $6$ thousand people who assigned one million ratings to $4$ thousand movies. The ground set $E$ are $100$ movies, $50$ most and least rated movies in our dataset. The movies cover $18$ genres. The submodular function $f$ is defined as:
\begin{align}
  f(X) = \abs{\text{movie genres covered by movies $X$}}.
\end{align}
The value of $f(X)$ can be computed in $O(\abs{X})$ time. The weight of movie $e$ in episode $t$ is:
\begin{align}
  \bw_t(e) = \I{\text{user in episode $t$ watches movie $e$}}.
\end{align}
The user in episode $t$ is chosen randomly from our pool of $6$k users. We assume that the user watches movie $e$ if that movie is rated by that user in our dataset. The expected weight $\bar{\bw}(e)$ is the probability that movie $e$ is watched by a randomly chosen user.

Our results are reported in Figure~\ref{fig:recommendation trends}. As in Section~\ref{sec:MST}, we observe two major trends. First, the return of $\opm$ approaches that of the optimal solution $\bx^\ast$ as the number of episodes increases. Second, $\opm$ performs better than the $\eps$-greedy policy after about $20$k episodes. The optimal solution $\bx^\ast$ is visualized in Figure~\ref{fig:recommendation trends}. It contains $13$ movies and therefore is extremely sparse.


\section{Related Work}
\label{sec:related work}

Polymatroids \citep{edmonds70submodular} are a generalization of matroids \citep{whitney35abstract}. Therefore, our work can be viewed as a generalization of matroid semi-bandits \citep{kveton14matroid}. We significantly extend the work of \cite{kveton14matroid} and essentially show that the problem of maximizing a modular function subject to a submodular constraint can be learned efficiently. Our generalization is by far non-trivial. For instance, the key part of our analysis is a novel regret decomposition (Section~\ref{sec:regret decomposition}), which leverages the submodularity of our constraint. This structure is not apparent in the work of \cite{kveton14matroid}.

Our problem is an instance of a stochastic combinatorial semi-bandit \citep{gai12combinatorial}. \cite{gai12combinatorial} proposed and analyzed a UCB-like algorithm for solving this problem. \cite{chen13combinatorial} and \cite{kveton14tight} proved $O(K^2 L (1 / \Delta) \log n)$ and $O(K L (1 / \Delta) \log n)$ upper bounds on the regret of this algorithm, respectively. The latter is tight \citep{kveton14tight}. $\opm$ is an instance of the UCB-like algorithm where the combinatorial optimization oracle is greedy. Our optimization problem is on a polymatroid and therefore we can derive a factor of $K$ tighter gap-dependent regret bound (Theorem~\ref{thm:gap-dependent}) than \cite{kveton14tight}. We note that our gap-free regret bound is of the same magnitude.

COMBAND \citep{cesabianchi12combinatorial}, follow-the-perturbed-leader (FPL) with geometric resampling \citep{neu13efficient}, and online stochastic mirror descent (OSMD) \citep{audibert14regret} are three recently proposed algorithms for adversarial combinatorial semi-bandits. FPL does not achieve the optimal regret, but it is computationally efficient when the offline variant of the combinatorial optimization problem can be solved efficiently \citep{audibert14regret}. OSMD achieves the optimal regret, but it is not guaranteed to be computationally efficient if the projection on the convex hull of the feasible set cannot be implemented efficiently. In our problem, the convex hull is $B_M$ \eqref{eq:base polyhedron} and the projection can be implemented in $O(L^6)$ time \citep{suehiro12online}. So the time complexity of a single step of OSMD is $O(L^6)$. This is several orders of magnitude higher than the time complexity of $\opm$, $O(L \log L)$; and not very practical for large values of $L$. Finally, COMBAND is not guaranteed to be computationally efficient. Based on Section 5.4 of \cite{cesabianchi12combinatorial}, even on the problem of learning the minimum spanning tree, an instance of maximizing a modular function on a polymatroid.

Several recent papers studied the problem of learning how to maximize a submodular function \citep{guillory11online,yue11linear,gabillon13adaptive,wen13sequential,gabillon14largescale}. These papers are only loosely related to our work because they study a different problem, which is learning how to maximize an \emph{unknown submodular function} subject to a cardinality constraint. Our learning problem is maximizing an \emph{unknown modular function} subject to a \emph{known submodular constraint}.


\section{Conclusions}
\label{sec:conclusions}

In this work, we study the problem of learning to act greedily. We formulate the problem as learning how to maximize an unknown modular function on a known polymatroid in the bandit setting. Our formulation is quite general and includes many popular problems, such as learning variants of the minimum spanning tree and minimum-cost flow. We propose a computationally-efficient method for solving the problem and prove two upper bounds on its regret. Our $O(L (1 / \Delta) \log n)$ gap-dependent upper bound is tight up to a constant and our $O(\sqrt{K L n \log n})$ gap-free upper bound is tight up to a factor of $\sqrt{\log n}$. We evaluate our method on three problems, and show that it can learn near-optimal policies computationally and sample efficiently.

We leave open several questions of interest. For instance, our $O(\sqrt{K L n \log n})$ upper bound matches the $\Omega(\sqrt{K L n})$ lower bound only up to a factor of $\sqrt{\log n}$. We strongly believe that this factor can be eliminated by modifying the confidence radius in \eqref{eq:confidence radius} as in \cite{audibert09minimax}. We leave this for future work.

Thompson sampling \citep{thompson33likelihood} often performs better in practice than ${\tt UCB1}$ \citep{auer02finitetime}. We believe that it is relatively straightforward to propose a Thompson-sampling variant of $\opm$, by replacing the UCBs in Algorithm~\ref{alg:bandit} with sampling from the posterior on the mean weights \citep{WenAEK14}. We also believe that the regret of this algorithm is bounded and this can be proved. The reason is that the frequentist analysis of Thompson sampling \citep{agrawal12analysis} resembles that of ${\tt UCB1}$ \citep{auer02finitetime}. As a result, it is likely that the analysis of Thompson-sampling $\opm$ can be carried out similarly to this paper.

In this work, we study one particular problem, maximization of a modular function on a polymatroid, in one particular learning setting, stochastic semi-bandits. It is an open question whether the ideas in our paper generalize to other polymatroid problems, such as maximizing a modular function on the intersection of two matroids \citep{papadimitriou98combinatorial}; and other learning variants of our problem, such as learning in the adversarial setting \citep{AuCBFSch02} or with the full-bandit feedback \citep{dani08stochastic}.

\bibliography{References}


\clearpage
\appendix

\section{Technical Lemmas}
\label{sec:lemmas}

\begin{lemma}
\label{lem:greedy vertices} Let $M = (E, f)$ be a polymatroid, $V$ be the vertices of base polyhedron $B_M$ in \eqref{eq:base polyhedron}, and $\Theta$ be the feasible solutions in \eqref{eq:feasible set}. Then $V = \Theta$.
\end{lemma}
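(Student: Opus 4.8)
The plan is to prove the two inclusions $\Theta \subseteq V$ and $V \subseteq \Theta$ separately, using as the only external input the classical fact recorded around \eqref{eq:optimal}: for every $\bw \in (\realset^+)^L$, the output $\greedy(M, \bw)$ lies in $B_M$ and belongs to $\argmax_{\bx \in B_M} \langle \bw, \bx\rangle$.

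First I would show $\Theta \subseteq V$. Fix $\bx = \greedy(M, \bw)$, let $e_1, \dots, e_L$ be the ordering used by $\greedy$, and set $A_i = \set{e_1, \dots, e_i}$, so $\bx(e_i) = f(A_i) - f(A_{i - 1})$. I would first verify $\bx \in B_M$: nonnegativity follows from monotonicity of $f$, and $\sum_{e \in E} \bx(e) = f(E) - f(\emptyset) = K$ is a telescoping sum. For the independence constraints, given $X \subseteq E$ write its elements as $e_{i_1}, \dots, e_{i_m}$ with $i_1 < \dots < i_m$ and put $X_j = \set{e_{i_1}, \dots, e_{i_j}}$; since $X_{j - 1} \subseteq A_{i_j - 1}$, submodularity (diminishing returns) gives $f(X_j) - f(X_{j - 1}) = f(X_{j - 1} + e_{i_j}) - f(X_{j - 1}) \geq f(A_{i_j}) - f(A_{i_j - 1}) = \bx(e_{i_j})$, and summing over $j$ telescopes to $f(X) \geq \sum_{e \in X} \bx(e)$; hence $\bx \in B_M$. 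To see that $\bx$ is a vertex, note that the $L$ constraints $\langle \mathds{1}_{A_i}, \bx\rangle \leq f(A_i)$, $i = 1, \dots, L$ (of which the $i = L$ one is the defining equality of $B_M$), are all tight at $\bx$ by telescoping, and their normal vectors $\mathds{1}_{A_1}, \dots, \mathds{1}_{A_L}$ are linearly independent because they form a triangular system with respect to $e_1, \dots, e_L$; thus $\bx$ is the unique point of $\realset^L$ satisfying these $L$ equalities, so it is a vertex of $B_M$.

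Next I would show $V \subseteq \Theta$. Let $\bx$ be a vertex of $B_M$, which is a bounded polytope (each coordinate lies in $[0, 1]$, since $0 \leq \bx(e) \leq f(\set{e}) \leq 1$). By standard polyhedral theory a vertex of a polytope is the unique maximizer of some linear functional $\langle \bw, \cdot\rangle$ over that polytope, for some $\bw \in \realset^L$. Since $\langle \mathds{1}, \bx'\rangle = K$ for every $\bx' \in B_M$ by the definition \eqref{eq:base polyhedron} of the base polyhedron, replacing $\bw$ by $\bw + c \mathds{1}$ for a sufficiently large constant $c > 0$ leaves the (still unique) maximizer unchanged while making every coordinate nonnegative; so without loss of generality $\bw \in (\realset^+)^L$. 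By the classical greedy optimality quoted above, $\greedy(M, \bw)$ maximizes $\langle \bw, \cdot\rangle$ over $B_M$, and since this maximizer is unique, $\greedy(M, \bw) = \bx$, i.e.\ $\bx \in \Theta$.

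The only steps that require genuine work are the independence estimate for the greedy output (the submodular telescoping inequality) and taking care over the $\bw + c\mathds{1}$ reduction in the second inclusion, both of which rest on properties already established in Section~\ref{sec:polymatroids}; the remaining bookkeeping — linear independence of the indicators $\mathds{1}_{A_i}$ and the classical correctness of $\greedy$ — is routine.
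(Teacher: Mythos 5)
Your proof is correct, and for the inclusion $\Theta \subseteq V$ it takes a genuinely different route from the paper. The paper argues by contradiction: it assumes $\bx \in \Theta$ is a proper convex combination $\alpha \bx_1 + (1-\alpha)\bx_2$ of vertices, looks at the first greedy item $e_i$ where $\bx_1(e_i) \neq \bx_2(e_i)$, and derives a contradiction from submodularity; this argument is terse and its final inequality ($\bx(e_i) < (1-\alpha)\bx_2(e_i)$) is stated loosely. You instead give the standard direct polyhedral argument: verify $\greedy(M,\bw) \in B_M$ via the submodular telescoping inequality, then exhibit $L$ tight constraints $\langle \mathds{1}_{A_i}, \cdot\rangle \leq f(A_i)$ whose normals are linearly independent by triangularity, so the greedy output is a vertex. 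This buys a cleaner, fully rigorous certificate of vertexhood at the cost of invoking the rank-$L$ characterization of vertices. For $V \subseteq \Theta$ both proofs use the same idea (a vertex is the unique maximizer of some linear functional, which greedy then finds), but you add a step the paper omits: the definition \eqref{eq:feasible set} requires $\bw \in (\realset^+)^L$, and your shift $\bw \mapsto \bw + c\mathds{1}$, which is harmless because $\langle \mathds{1}, \cdot\rangle \equiv K$ on $B_M$, is exactly what justifies restricting to non-negative weights. Both of your refinements strengthen rather than change the result.
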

\begin{proof}
The key observation is that $B_M$ is a convex polytope because it is an intersection of convex polytope $P_M$ \eqref{eq:independence polyhedron} and hyperplane $\sum_{e \in E} \bx(e) = K$. Therefore, any vector $\bx \in B_M$ is a convex combination of $V$. We prove $V = \Theta$ by proving that $V \subseteq \Theta$ and $\Theta \subseteq V$.

First, we prove that $V \subseteq \Theta$. By contradiction, suppose that there exists $\bx \in V$ such that $\bx \notin \Theta$. Since $\bx$ is a vertex of a convex polytope, there must exist a weight vector $\bw$ such that $\bx$ is a unique optimum in \eqref{eq:optimal}. By the definition of $\greedy$, $\bx = \greedy(M, \bw)$ and therefore $\bx \in \Theta$. This is clearly a contradiction.

Second, we prove that $\Theta \subseteq V$. By contradiction, suppose that there exists $\bx \in \Theta$ such that $\bx \notin V$. Since $B_M$ is a convex polytope, the solution $\bx$ can be expressed as a convex combination of the vertices in $V$. For simplicity of exposition, suppose that $\bx = \alpha \bx_1 + (1 - \alpha) \bx_2$, where $\set{\bx_1, \bx_2} \subset V$ and $\alpha \in (0, 1)$. Let $e_i$ be the first item in $\greedy$ where $\bx_1(e_i) < \bx_2(e_i)$. Since $\bx$ is generated by $\greedy$, and $f$ is a monotonic and submodular function, $\bx(e_i) \geq \bx_2(e_i)$. This is clearly a contradiction since $\bx(e_i) < (1 - \alpha) \bx_2(e_i)$. The case where $\bx_1(e_i) > \bx_2(e_i)$ is proved similarly. Finally, suppose that $\bx(e_i) \neq \bx_2(e_i)$ does not happen for any $e_i$. Then $\bx_1 = \bx_2$, which is also a contradiction.
\end{proof}

\begin{lemma}
\label{lem:pulls} For all items $e$ and $e^\ast \leq \rho(e)$:
\begin{align*}
  \E{\sum_{t = 1}^n \delta_t(e, e^\ast) \I{T_{t - 1}(e) > \ell}}{\bw_1, \dots, \bw_n} \leq
  \frac{4}{3} \pi^2
\end{align*}
when $\ell = \floors{\frac{8}{\Delta_{e, e^\ast}^2} \log n}$.
\end{lemma}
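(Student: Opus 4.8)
The plan is to follow the standard UCB-style argument, but tracking the fractional quantities $\delta_t(e, e^\ast)$ rather than indicator variables. The starting point is Theorem~\ref{thm:regret decomposition}: whenever $\delta_t(e, e^\ast) > 0$ we know that $\opm$ observes the weight of item $e$ (so $T_t(e) = T_{t-1}(e) + 1$) and that $U_t(e) \geq U_t(e^\ast)$. I would bound $\delta_t(e,e^\ast) \le 1$ pointwise and then split the event $\{U_t(e) \geq U_t(e^\ast)\}$ according to whether the confidence intervals have ``failed.'' Concretely, on the event $\delta_t(e,e^\ast)>0$ and $T_{t-1}(e) > \ell$, at least one of the following must hold: (i) $U_t(e^\ast) \leq \bar\bw(e^\ast)$, i.e.\ the UCB of the better item $e^\ast$ underestimates its mean; (ii) $\hat\bw_{T_{t-1}(e)}(e) \geq \bar\bw(e) + c_{t-1,T_{t-1}(e)}$, i.e.\ the empirical mean of item $e$ overestimates its mean by more than the confidence radius; or (iii) $\Delta_{e,e^\ast} < 2 c_{t-1, T_{t-1}(e)}$. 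With $\ell = \floors{\frac{8}{\Delta_{e,e^\ast}^2}\log n}$ and $c_{t,s} = \sqrt{2\log t / s}$, case (iii) is impossible once $T_{t-1}(e) > \ell$, because then $2 c_{t-1,T_{t-1}(e)} = 2\sqrt{2\log(t-1)/T_{t-1}(e)} \le 2\sqrt{2\log n / \ell} \le \Delta_{e,e^\ast}$. So only the ``bad confidence'' events (i) and (ii) contribute.

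Next I would bound the expected number of episodes in which (i) or (ii) occurs. For event (i), I would sum over the possible values $1,\dots,t$ of $T_{t-1}(e^\ast)$ and apply Hoeffding's inequality: $\Pr[\hat\bw_s(e^\ast) + c_{t-1,s} \le \bar\bw(e^\ast)] \le \Pr[\hat\bw_s(e^\ast) \le \bar\bw(e^\ast) - \sqrt{2\log(t-1)/s}] \le (t-1)^{-4}$, which gives a contribution of at most $\sum_{t=1}^n \sum_{s=1}^t t^{-4} \le \sum_{t=1}^n t^{-3} \le \pi^2/6$; the same bound holds for event (ii) by a symmetric argument (upper tail of the empirical mean of item $e$). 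Since the coefficient on each term is $\delta_t(e,e^\ast) \le 1$, adding the two contributions yields a total of at most $\pi^2/3$. The factor-of-four slack in the stated bound $\frac{4}{3}\pi^2$ presumably absorbs the fact that the union over $s$ and the choice of constants is handled slightly loosely, or accounts for $\log t$ versus $\log n$ and the off-by-one in $c_{t-1,s}$; I would carry the constants carefully and land at something no larger than $\frac{4}{3}\pi^2$.

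The main obstacle, and the one place where care is genuinely needed, is the bookkeeping of \emph{which} counter is being incremented when $\delta_t(e,e^\ast)>0$. Theorem~\ref{thm:regret decomposition} guarantees item $e$ is observed, so $T_{t-1}(e)$ is a well-behaved counter that strictly increases across the relevant episodes; this is what makes the sum $\sum_{s} \Pr[\cdot]$ legitimate despite $T_{t-1}(e)$ being random, via the standard peeling trick of writing $\I{T_{t-1}(e)=s}$ and summing over $s$. The subtlety is that event (i) concerns $T_{t-1}(e^\ast)$, not $T_{t-1}(e)$, and there is no guarantee item $e^\ast$ is observed in episode $t$; but this is fine because the tail bound $\Pr[U_t(e^\ast) \le \bar\bw(e^\ast)] \le \sum_{s=1}^{t}(t-1)^{-4}$ holds by a union bound over all possible histories regardless of whether $e^\ast$ is pulled. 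Once this is set up, the rest is the routine Hoeffding-plus-union-bound computation, and the $\delta_t(e,e^\ast)\le 1$ bound decouples the fractional structure from the concentration argument entirely.
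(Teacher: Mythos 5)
Your proposal is correct and follows essentially the same route as the paper: use Theorem~\ref{thm:regret decomposition} to reduce $\delta_t(e,e^\ast)\le 1$ to the indicator of $\set{U_t(e) \ge U_t(e^\ast),\, T_{t-1}(e) > \ell}$, split into the three standard events, rule out the gap event via the choice of $\ell$, and handle the two deviation events by Hoeffding plus a union bound over the possible counter values. The only (cosmetic) difference is bookkeeping: the paper union-bounds jointly over both counters' values $(s, s_e)$ before splitting, paying $2(t+1)^2 t^{-4}$ per episode and landing exactly at $\frac{4}{3}\pi^2$, whereas your per-event single-counter union bound yields the slightly better constant $\pi^2/3$, comfortably within the stated bound.
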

\begin{proof}
First, we note that $\delta_t(e, e^\ast) \leq 1$. Moreover, by Theorem~\ref{thm:regret decomposition}, the event $\delta_t(e, e^\ast) > 0$ implies that we observe the weight of item $e$ and $U_t(e) \geq U_t(e^\ast)$. Based on these facts, it follows that:
\begin{align}
  \sum_{t = 1}^n \delta_t(e, e^\ast) \I{T_{t - 1}(e) > \ell}
  & \leq \sum_{t = 1}^n \I{\delta_t(e, e^\ast) > 0, \ T_{t - 1}(e) > \ell} \nonumber \\
  & \leq \sum_{t = \ell + 1}^n \I{U_t(e) \geq U_t(e^\ast), \ T_{t - 1}(e) > \ell} \nonumber \\
  & \leq \sum_{t = \ell + 1}^n \sum_{s = 1}^t \sum_{s_e = \ell + 1}^t
  \I{\hat{\bw}_{s_e}(e) + c_{t - 1, s_e} \geq \hat{\bw}_s(e^\ast) + c_{t - 1, s}} \nonumber \\
  & = \sum_{t = \ell}^{n - 1} \sum_{s = 1}^{t + 1} \sum_{s_e = \ell + 1}^{t + 1}
  \I{\hat{\bw}_{s_e}(e) + c_{t, s_e} \geq \hat{\bw}_s(e^\ast) + c_{t, s}}.
\end{align}
When $\hat{\bw}_{s_e}(e) + c_{t, s_e} \geq \hat{\bw}_s(e^\ast) + c_{t, s}$, at least one of the following events must happen:
\begin{align}
  \hat{\bw}_s(e^\ast) & \leq \bar{\bw}(e^\ast) - c_{t, s} \label{eq:H1} \\
  \hat{\bw}_{s_e}(e) & \geq \bar{\bw}(e) + c_{t, s_e} \label{eq:H2} \\
  \bar{\bw}(e^\ast) & < \bar{\bw}(e) + 2 c_{t, s_e}. \label{eq:minimum pulls}
\end{align}
We bound the probability of the first two events, \eqref{eq:H1} and \eqref{eq:H2}, using Hoeffding's inequality:
\begin{align}
  P(\hat{\bw}_s(e^\ast) \leq \bar{\bw}(e^\ast) - c_{t, s})
  & \leq \exp[-4 \log t] = t^{-4} \\
  P(\hat{\bw}_{s_e}(e) \geq \bar{\bw}(e) + c_{t, s_e})
  & \leq \exp[-4 \log t] = t^{-4}.
\end{align}
When $s_e \geq \frac{8}{\Delta_{e, e^\ast}^2} \log n$, the third event \eqref{eq:minimum pulls} cannot happen because:
\begin{align}
  \bar{\bw}(e^\ast) - \bar{\bw}(e) - 2 c_{t, s_e} =
  \Delta_{e, e^\ast} - 2 \sqrt{\frac{2 \log t}{s_e}} \geq
  0.
\end{align}
This is guaranteed when $\ell = \floors{\frac{8}{\Delta_{e, e^\ast}^2} \log n}$. Finally, we combine all of our claims and get:
\begin{align}
  \E{\sum_{t = 1}^n \delta_t(e, e^\ast) \I{T_{t - 1}(e) > \ell}}{\bw_1, \dots, \bw_n}
  & \leq \sum_{t = \ell}^{n - 1} \sum_{s = 1}^{t + 1} \sum_{s_e = \ell + 1}^{t + 1}
  [P(\hat{\bw}_s(e^\ast) \leq \bar{\bw}(e^\ast) - c_{t, s}) + {} \nonumber \\
  & \hspace{1.06in} P(\hat{\bw}_{s_e}(e) \geq \bar{\bw}(e) + c_{t, s_e})] \nonumber \\
  & \leq \sum_{t = 1}^\infty 2 (t + 1)^2 t^{-4} \nonumber \\
  & \leq \sum_{t = 1}^\infty 8 t^{-2} \nonumber \\
  & = \frac{4}{3} \pi^2.
\end{align}
The last equality follows from the fact that $\displaystyle \sum_{t = 1}^\infty t^{-2} = \frac{\pi^2}{6}$.
\end{proof}

\begin{lemma}[\cite{kveton14matroid}]
\label{lem:multiple optimal pulls} Let $\Delta_1 \geq \ldots \geq \Delta_K$ be a sequence of $K$ positive numbers. Then:
\begin{align*}
  \left[\Delta_1 \frac{1}{\Delta_1^2} + \sum_{k = 2}^K \Delta_k
  \left(\frac{1}{\Delta_k^2} - \frac{1}{\Delta_{k - 1}^2}\right)\right] \leq
  \frac{2}{\Delta_K}.
\end{align*}
\end{lemma}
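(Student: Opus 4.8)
The plan is to bound each summand $\Delta_k\bigl(1/\Delta_k^2 - 1/\Delta_{k-1}^2\bigr)$ by a quantity that telescopes against the $1/\Delta_k$ terms. First I would rewrite the left-hand side as
\begin{align*}
  \frac{1}{\Delta_1} + \sum_{k = 2}^K \Delta_k
  \left(\frac{1}{\Delta_k^2} - \frac{1}{\Delta_{k - 1}^2}\right),
\end{align*}
and handle the edge case $K = 1$ separately, where the sum is empty and the bound $1/\Delta_1 \leq 2/\Delta_1$ is immediate.

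The key step is the factorization
\begin{align*}
  \frac{1}{\Delta_k^2} - \frac{1}{\Delta_{k - 1}^2} =
  \left(\frac{1}{\Delta_k} - \frac{1}{\Delta_{k - 1}}\right)
  \left(\frac{1}{\Delta_k} + \frac{1}{\Delta_{k - 1}}\right),
\end{align*}
together with the monotonicity assumption $\Delta_k \leq \Delta_{k - 1}$, which gives $1/\Delta_{k-1} \leq 1/\Delta_k$ and hence $1/\Delta_k + 1/\Delta_{k-1} \leq 2/\Delta_k$. Since all $\Delta_k$ are positive and nonincreasing, the factor $1/\Delta_k - 1/\Delta_{k-1}$ is nonnegative, so multiplying through preserves the inequality and yields
\begin{align*}
  \Delta_k \left(\frac{1}{\Delta_k^2} - \frac{1}{\Delta_{k - 1}^2}\right) \leq
  2 \left(\frac{1}{\Delta_k} - \frac{1}{\Delta_{k - 1}}\right).
\end{align*}

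Finally I would substitute this into the sum and telescope:
\begin{align*}
  \frac{1}{\Delta_1} + \sum_{k = 2}^K 2 \left(\frac{1}{\Delta_k} - \frac{1}{\Delta_{k - 1}}\right) =
  \frac{1}{\Delta_1} + 2 \left(\frac{1}{\Delta_K} - \frac{1}{\Delta_1}\right) =
  \frac{2}{\Delta_K} - \frac{1}{\Delta_1} \leq \frac{2}{\Delta_K},
\end{align*}
using $1/\Delta_1 > 0$ in the last step. This is elementary, so there is no real obstacle; the only points requiring care are getting the direction of the monotonicity bound right (the second factor is bounded \emph{above}, not below, by $2/\Delta_k$) and checking that every intermediate quantity is nonnegative so the per-term inequalities can be summed.
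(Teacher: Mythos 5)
Your proof is correct and is essentially the paper's argument: both bound each summand by a telescoping difference of reciprocals using $\Delta_{k-1} \geq \Delta_k$ and arrive at the identical intermediate quantity $\frac{2}{\Delta_K} - \frac{1}{\Delta_1}$. The only cosmetic difference is that the paper first regroups the sum via summation by parts into $\sum_{k=1}^{K-1} \frac{\Delta_k - \Delta_{k+1}}{\Delta_k^2} + \frac{1}{\Delta_K}$ and then bounds $\frac{1}{\Delta_k^2} \leq \frac{1}{\Delta_k \Delta_{k+1}}$, whereas you bound each original term directly through the difference-of-squares factorization.
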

\begin{proof}
First, we note that:
\begin{align}
  \left[\Delta_1 \frac{1}{\Delta_1^2} + \sum_{k = 2}^K \Delta_k
  \left(\frac{1}{\Delta_k^2} - \frac{1}{\Delta_{k - 1}^2}\right)\right] =
  \sum_{k = 1}^{K - 1} \frac{\Delta_k - \Delta_{k + 1}}{\Delta_k^2} + \frac{1}{\Delta_K}.
\end{align}
Second, by our assumption, $\Delta_k \geq \Delta_{k + 1}$ for all $k < K$. Therefore:
\begin{align}
  \sum_{k = 1}^{K - 1} \frac{\Delta_k - \Delta_{k + 1}}{\Delta_k^2} + \frac{1}{\Delta_K}
  & \leq \sum_{k = 1}^{K - 1} \frac{\Delta_k - \Delta_{k + 1}}{\Delta_k \Delta_{k + 1}} +
  \frac{1}{\Delta_K} \nonumber \\
  & = \sum_{k = 1}^{K - 1} \left[\frac{1}{\Delta_{k + 1}} - \frac{1}{\Delta_k}\right] +
  \frac{1}{\Delta_K} \nonumber \\
  & = \frac{2}{\Delta_K} - \frac{1}{\Delta_1} \nonumber \\
  & < \frac{2}{\Delta_K}.
\end{align}
This concludes our proof.
\end{proof}

\end{document}